\documentclass{article} 
\usepackage{iclr2019_conference,times}


\usepackage{amsmath,amsfonts,bm}









\def\eqref#1{equation~\ref{#1}}









\def\1{\bm{1}}










\DeclareMathAlphabet{\mathsfit}{\encodingdefault}{\sfdefault}{m}{sl}
\SetMathAlphabet{\mathsfit}{bold}{\encodingdefault}{\sfdefault}{bx}{n}













\usepackage[utf8]{inputenc} 
\usepackage[T1]{fontenc}    
\usepackage{hyperref}       
\usepackage{url}            
\usepackage{booktabs}       
\usepackage{amsfonts}       
\usepackage{nicefrac}       
\usepackage{microtype}      
\usepackage{caption}
\usepackage{amsmath}
\usepackage{amssymb}
\usepackage{amsthm}
\usepackage{mathtools}
\usepackage{subcaption}
\usepackage{graphicx}
\graphicspath{ {figure/} }
\usepackage{wrapfig}
\usepackage{comment}
\usepackage{float}

\newtheorem{theorem}{Theorem}
\newtheorem{corollary}{Corollary}[theorem]

\title{The Unusual Effectiveness of Averaging in GAN Training}


\author{Yasin Yaz{\i}c{\i}\thanks{~Corresponding Author: \texttt{yasin001@e.ntu.edu.sg}}\\
Nanyang Technological University (NTU)\\
\And
Chuan-Sheng Foo\\
Institute for Infocomm Research, A*STAR\\
\And
Stefan Winkler\\
National University of Singapore (NUS)\\
\And
Kim-Hui Yap\\
Nanyang Technological University (NTU)\\
\And
Georgios Piliouras\thanks{~Joint last authors}\\
Singapore University of Technology and Design\\
\And
Vijay Chandrasekhar\footnotemark[2]\\
Institute for Infocomm Research, A*STAR\\
}

%

\iclrfinalcopy 
\begin{document}

\maketitle

\begin{abstract}
We examine two different techniques for parameter averaging in GAN training. Moving Average (MA) computes the time-average of parameters, whereas Exponential Moving Average (EMA) computes an exponentially discounted sum. Whilst MA is known to lead to convergence in bilinear settings, we provide the -- to our knowledge -- first theoretical arguments in support of EMA. We show that EMA converges to limit cycles around the equilibrium with vanishing amplitude as the discount parameter approaches one for simple bilinear games and also enhances the stability of general GAN training. We establish experimentally that both techniques are strikingly effective in the non-convex-concave GAN setting as well. Both improve inception and FID scores on different architectures and for different GAN objectives. We provide comprehensive experimental results across a range of datasets -- mixture of Gaussians, CIFAR-10, STL-10, CelebA and ImageNet -- to demonstrate its effectiveness. We achieve state-of-the-art results on CIFAR-10 and produce clean CelebA face images.\footnote{~The code is available at \url{https://github.com/yasinyazici/EMA_GAN}}
\end{abstract}

\section{Introduction}
Generative Adversarial Networks (GANs) \citep{gan} are two-player zero-sum games. They are known to be notoriously hard to train, unstable, and often don't converge. There has been a lot of recent work to improve the stability of GANs by addressing various underlying issues. Searching for a more stable function family \citep{dcgan} and utilizing better objectives \citep{wgan,wgan_gp} have been explored for training and instability issues. For tackling non-convergence, regularizing objective functions \citep{numerics-of-gan,improved-gan}, two time-scale techniques \citep{DBLP:journals/corr/HeuselRUNKH17}  and extra-gradient (optimistic) methods \citep{optimism,ExtraGradient2019} have been studied. 


One reason for non-convergence is cycling around an optimal solution \citep{Mertikopoulos:2018:CAR:3174304.3175476,entropy19}, or even slow outward spiraling \citep{Bailey:2018:MWU:3219166.3219235}. There might be multiple reasons for this cycling behavior: (i) eigenvalues of the Jacobian of the gradient vector field have zero real part and a large imaginary part \citep{numerics-of-gan}. (ii) Gradients of the discriminator may not move the optimization in the right direction and may need to be updated many times to do so \citep{wgan}. 
(iii) Sampling noise causes the discriminator to move in different directions at each iteration, as a result of which the generator fluctuates. (iv) Even if the gradients of the discriminator point in the right direction, the learning rate of the generator might be too large. When this overshooting happens, the gradients of the discriminator lead in the wrong direction due to over-confidence \citep{DBLP:journals/corr/abs-1801-04406}.

In this work, we explore in detail simple strategies for tackling the cycling behavior without influencing the adversarial game. Our strategies average generator parameters over time \emph{outside} the training loop. Averaging generator and discriminator parameters is known to be an optimal solution for convex-concave min-max games \citep{multiplicative_weights}. However, no such guarantees are known (even for bilinear games) if we apply exponential discounting. 

Our contributions are the following: (i) We show theoretically that although EMA does not converge to equilibrium, even in bilinear games, it nevertheless helps to stabilize cyclic behavior by shrinking its amplitude. In non-bilinear settings it preserves the stability of locally stable fixed points. 
(ii) We demonstrate that both averaging techniques consistently improve results for several different datasets, network architectures, and GAN objectives. (i) We compare it with several other methods that try to alleviate the cycling or non-convergence problem and demonstrate its unusual effectiveness.

\section{Related Work}
There have been attempts to improve stability and alleviate cycling issues arising in GANs. \citet{improved-gan} use historical averaging of both generator and discriminator parameters as a regularization term in the objective function: each model deviating from its time average is penalized to improve stability. \citet{fictitious} update the discriminator by using a mixture of historical generators. \citet{DBLP:journals/corr/HeuselRUNKH17} use two time scale update rules to converge to a local equilibrium.

\citet{numerics-of-gan} identify the presence of eigenvalues of the Jacobian of the gradient vector with zero real part and large imaginary part as a reason for non-convergence. They address this issue by including squared gradients of the players to the objective. 
Extra-gradient (optimistic) methods help with convergence  \citep{optimism,ExtraGradient2019}. Some of the above stated methods such as \citet{improved-gan, numerics-of-gan} influence the min-max game by introducing regularization terms into the objective function. Regularizing the adversarial objective can potentially change the optimal solution of the original objective. Our method on the other hand does not change the game dynamics, by keeping an average over iterates \emph{outside} of the training loop. 

\textbf{Averaging at maximum likelihood objective:} Uniform Average and Exponential Moving Average over model parameters has been used for a long time \citep{Polyak:1992:ASA:131092.131098, DBLP:journals/corr/abs-1711-10433, 2018arXiv180605594A, DBLP:journals/corr/abs-1803-05407}. However we would like to emphasize the difference between convex/non-convex optimization and saddle point optimization. GANs are an instance of the latter approach. There is no formal connection between the two approaches, so we cannot extrapolate our understanding of averaging at maximum likelihood objectives to saddle point objectives.

\textbf{Averaging at minimax objective:} \citet{DBLP:journals/corr/abs-1802-10551} used uniform averaging similar to our work, however we show its shortcomings and how exponential decay averaging performs better. While this strategy has been used in \citep{pggan} recently, their paper lacks any insights or a detailed analysis of this approach.

\section{Method}

In case of convex-concave min-max games, it is known that the average of generator/discriminator parameters is an optimal solution \citep{multiplicative_weights}, but there is no such guarantee for a non-convex/concave setting. 
Moving Average (MA) over parameters $\theta$ is an efficient implementation of uniform averaging without saving iterates at each time point:
\begin{equation}
\theta_{MA}^{(t)} = \frac{t-1}{t} \theta_{MA}^{(t-1)} + \frac{1}{t} \theta^{(t)}
\label{ma}
\end{equation}

When the generator iterates reaches a stationary distribution, Eq.\ref{ma} approximates its mean. However there are two practical difficulties with this equation. First we do not know at which time point the iterates reach a stationary distribution. Second, and more importantly, iterates may not stay in the same distribution after a while. Averaging over samples coming from different distributions can produce worse results, which we have also observed in our own experiments.

Because of the issues stated above, we use Exponential Moving Average (EMA):
\begin{equation}
\theta_{EMA}^{(t)} = \beta \theta_{EMA}^{(t-1)} + (1-\beta)\theta^{(t)}
\label{ema}
\end{equation}
where $\theta_{EMA}^{(0)} = \theta^{(0)}.$  EMA has an effective time window over iterates, where early iterates fade at a rate depending on the $\beta$ value. As $\beta$ approaches $1$, averaging effectively computes longer time windows. 

Both EMA and MA offer performance gains, with EMA typically being more robust (for more details see Section \ref{sec:Experiments}).
Averaging methods are simple to implement and have minimal computation overhead. As they operate outside of the training loop, they do not influence optimal points of the game. Hyperparameter search for EMA can be done with a single training run by simultaneously keeping track of multiple averages with different $\beta$ parameters.

\section{Why Does Averaging Work?}

\subsection{Global Analysis for Simple Bilinear Games}

As we will show experimentally, parameter averaging provides significant benefits in terms of GAN training across a wide range of different setups. Although one cannot hope to completely justify this experimental success theoretically in general non-convex-concave settings, it is at least tempting to venture some informed conjectures about the root causes of this phenomenon. To do so, we focus on the simplest possible class of saddle point problems, the class of planar bilinear problems (i.e. zero-sum games such as Matching Pennies).  This is a toy, albeit classic, pedagogical example used to explain phenomena arising in GANs. Specifically, similar to \citet{DBLP:journals/corr/Goodfellow17}, we also move to a continuous time model to simplify the mathematical analysis.

In terms of the moving average method, it is well known that  time-averaging suffices to lead to convergence to equilibria \citep{multiplicative_weights}. In fact, it has recently been established that the smooth (continuous-time) analogues of first order methods such as online gradient descent (follow-the-regularized leader) in bilinear zero-sum games are recurrent (i.e.\ effectively periodic) with trajectories cycling back into themselves \citep{Mertikopoulos:2018:CAR:3174304.3175476}. As a result, the time-average of the system converges fast, at a $\mathcal{O}(1/T)$ rate, to the solution of the saddle point problem \citep{Mertikopoulos:2018:CAR:3174304.3175476}. These systems have connections to physics \citep{bailey2019}.

In contrast, not much is known about the behavior of EMA methods even in the simplest case of zero-sum games. As we will show by analyzing the simplest case of a bilinear saddle problem $\max_x \min_y xy$, even in this toy case EMA does not lead to convergent behavior, but instead reduces the size of the oscillations, leading to more stable behavior whilst enabling exploration in a close neighborhood around the equilibrium. In this case the gradient descent dynamics are as follows:
\[ 
\left \{
  \begin{tabular}{ccc}
  $\frac{dx}{dt}$& = & $y$ \\
$\frac{dy}{dt}$&=&$-x$ 
  \end{tabular}
\right \}
\Leftrightarrow
\left \{
  \begin{tabular}{ccc}
  $\frac{dx}{dt}$& = & $\frac{\partial H}{\partial y}$ \\
$\frac{dy}{dt}$&=&$-\frac{\partial H}{\partial x}$ 
  \end{tabular}
\right \}
\]

%
%
\noindent
This corresponds to a standard Hamiltonian system whose energy function is equal to $H=\frac{x^2+y^2}{2}$. All trajectories are cycles centered at $0$. In fact, the solution to the system has the form of a periodic orbit $(x(t),y(t))=(a\cos(t),a\sin(t)).$ These connections have  recently been exploited by \citet{balduzzi2018mechanics} to design training algorithms for GANs (see also \citet{bailey2019}). Here we take a different approach.
For simplicity let's assume $a=1$. The Exponential Moving Average  method now corresponds to an integral of the form $C \cdot\int_0^T \beta^{T-t} \theta(t) dt$ where  $\theta(t)=(\cos(t),\sin(t))$ and $C$ a normalizing term such that the EMA of a constant function is equal to that constant (i.e.\ $C=1/\int_0^T \beta^{T-t} dt$). Some simple algebra implies that 
$C\cdot\int_0^T \beta^{T-t} \sin(t) dt = \frac{-\ln(\beta)}{1 + \ln^2(\beta)}
\frac{\beta^T - \cos(T) - \ln(\beta)\sin(T)}{1 - \beta^T}.$  As the terms $\beta^T$ vanish exponentially fast, this function reduces to $\frac{\ln(\beta)}{1 + \ln^2(\beta)}
 ( \cos(T) + \ln(\beta)\sin(T)).$
This is a periodic function, similarly to the components of $\theta(t)$, but its amplitude is significantly reduced, and as a result it is more concentrated around its mean value (which is the same as its corresponding $\theta(t)$ component, i.e.\ zero).  
For $\beta$ very close to $1$, e.g.\ $\beta=0.999$ this periodic function can be further approximated by $\ln(\beta)\cos(T)$. In other words, no matter how close to $1$ we choose the discounting factor $\beta$, even in the toy case of a planar bilinear saddle problem (i.e.\ the Matching Pennies game) EMA has a residual non-vanishing periodic component, alas of very small amplitude around the solution of the saddle point problem.


\begin{figure}[H]
  \centering
      \includegraphics[width=0.45\textwidth]{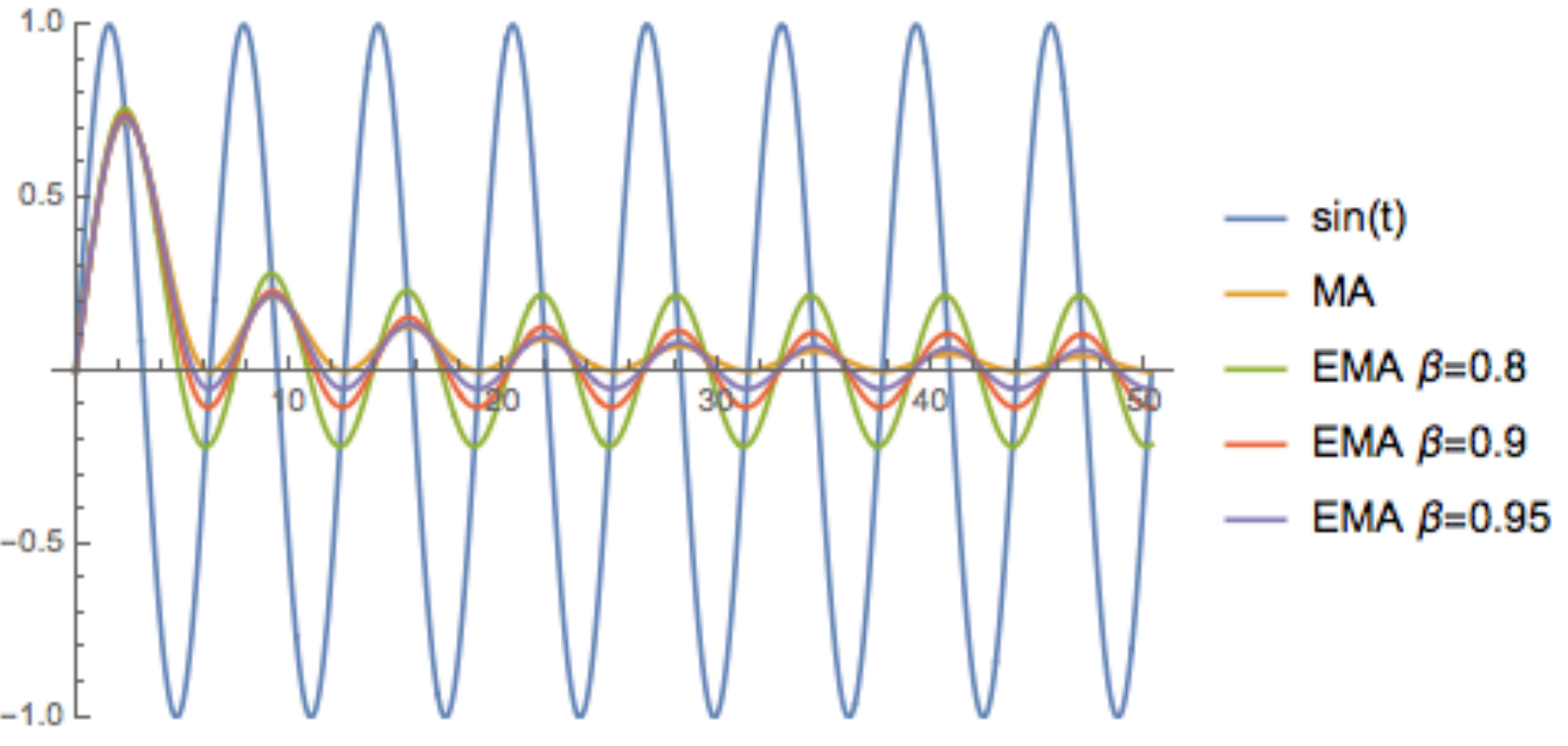}
  \caption{MA versus EMA for various values of $\beta$.}
\end{figure}

\subsection{Local Stability Analysis for GANs}

The model of the previous section introduced  two simplifications so as to be theoretically tractable: i) it focused on a (small) bilinear game, and ii) it was a continuous time model. Despite these simplifications the global behavior was still rather complex, indicating the difficulty of  completely analyzing a more realistic model. Nevertheless, in this section we examine such a model (discrete-time non-convex loss functions) and show that EMA preserves the stability of locally stable fixed points (Nash equilibria), providing further theoretical evidence in support of averaging. Due to the complexity of GANs, local stability analysis is often used as a method for obtaining theoretical insights, see e.g.\   \citet{nagarajan2017gradient,DBLP:journals/corr/abs-1801-04406}.

Suppose we have a generator $G$ parameterized by $\theta \in \mathbf{R}^p$, and a discriminator $D$ parameterized by $\phi \in \mathbf{R}^q$; let $x = (\theta, \phi) \in \mathbf{R}^n$, $n = p + q$. Then, during training, $G$ and $D$ minimize loss functions $\mathcal{L}^{(\theta)}(\theta, \phi)$ and $\mathcal{L}^{(\phi)}(\theta, \phi)$ respectively. Following the same formulation as in \citet{2018arXiv180704740G}:
\begin{align*}
    \theta^{*} \in \operatorname*{argmin}_\theta \mathcal{L}^{(\theta)}(\theta, \phi^{*})
    &&
    \phi^{*} \in \operatorname*{argmin}_\phi \mathcal{L}^{(\phi)}(\theta^{*}, \phi)
\end{align*}

For specific possible instantiations of these loss functions see \citet{gan,wgan}. We define the gradient vector field $V(\theta, \phi) \in \mathbf{R}^n$ and its Jacobian $J_V(\theta, \phi) \in \mathbf{R}^{n \times n}$ as follows:
\begin{align*}
    V(\theta, \phi) = 
      \begin{bmatrix}
      \nabla_\theta \mathcal{L}^{(\theta)}(\theta, \phi) \\
      \nabla_\phi \mathcal{L}^{(\phi)}(\theta, \phi)
      \end{bmatrix}
    & &
    J_V(\theta, \phi) = 
      \begin{bmatrix}
      \nabla^2_\theta \mathcal{L}^{(\theta)}(\theta, \phi) &
      \nabla_\phi \nabla_\theta \mathcal{L}^{(\theta)}(\theta, \phi) \\
      \nabla_\theta \nabla_\phi \mathcal{L}^{(\phi)}(\theta, \phi) &
      \nabla^2_\phi \mathcal{L}^{(\phi)}(\theta, \phi) \\
      \end{bmatrix}.
\end{align*}

In this analysis, we assume that $G$ and $D$ are trained with some method that can be expressed as the iteration 
\begin{equation}\label{eq:training-operator}
    x_t = F(x_{t-1})
\end{equation}
for some function $F$, where $x_t = (\theta_t, \phi_t)$ denotes the parameters obtained after $t$ training steps. This framework covers all training algorithms that rely only on a single past iterate to compute the next iterate, and includes simultaneous gradient descent and alternating gradient descent methods. More concretely, simultaneous gradient descent with a learning rate $\eta$ can be written in this form as follows
\begin{equation}\label{eq:sgd-operator}
    F_{\eta}(x) = x - \eta V(x).
\end{equation}

EMA then averages the iterates $x_t$ to produce a new sequence of iterates $\hat{x}_t$ such that 
%
%
\begin{align*}
    \hat{x}_t 
      &= \beta \hat{x}_{t-1} + (1-\beta)x_t && \text{(by definition (\ref{ema}))} \\
      &= \beta \hat{x}_{t-1} + (1-\beta)F(x_{t-1}) && \text{(Replace $x_t$ using Eq.~(\ref{eq:training-operator}))} \\ 
      &= \beta \hat{x}_{t-1} + (1-\beta)F\left(\frac{\hat{x}_{t-1} - \beta \hat{x}_{t-2}}{1-\beta}\right) && \text{(Replace $x_{t-1}$ using definition (\ref{ema}))}
\end{align*}

We have thus expressed the EMA algorithm purely in terms of a new set of iterates $\hat{x}_t$. Note that computing the next iterate now requires the past two iterates. \citet{2015arXiv150401577F} observed that the standard moving average can similarly be expressed as a second-order difference equation. We can then define the EMA operator $\hat{F}_{\beta}(x_t, x_{t-1})$ such that $\begin{bsmallmatrix}x_t \\ x_{t-1}\end{bsmallmatrix} = \hat{F}_{\beta}(x_{t-1}, x_{t-2})$
\begin{align*}
    \hat{F}_{\beta}(x_t, x_{t-1}) =
    \begin{bmatrix}
    \beta \mathbf{I}_n & \mathbf{0}_n \\
    \mathbf{I}_n & \mathbf{0}_n
    \end{bmatrix}
    \begin{bmatrix}
    x_t \\
    x_{t-1}
    \end{bmatrix}
    + (1-\beta) \begin{bmatrix}
    F(A[x_t \enspace x_{t-1}]^T) \\
    \mathbf{0}_n
    \end{bmatrix}
    \text{, for }
    A = \frac{1}{1-\beta}[\mathbf{I}_n \enspace {-\beta}\mathbf{I}_n].
\end{align*}
The Jacobian of $\hat{F}_{\beta}$, $J_{\hat{F}_{\beta}}$ is therefore
\begin{align*}
    J_{\hat{F}_{\beta}}(x_t, x_{t-1}) &=
    \begin{bmatrix}
    \beta \mathbf{I}_n & \mathbf{0}_n \\
    \mathbf{I}_n & \mathbf{0}_n
    \end{bmatrix}
    + (1-\beta) \begin{bmatrix}
    J_F(A[x_t \enspace x_{t-1}]^T) \cdot A \\
    \mathbf{0}_n
    \end{bmatrix} \\
    &= \begin{bmatrix}
    \beta \mathbf{I}_n + J_F(A[x_t \enspace x_{t-1}]^T) & {-\beta} J_F(A[x_t \enspace x_{t-1}]^T) \\
    \mathbf{I}_n & \mathbf{0}_n
    \end{bmatrix}
\end{align*}

We can now proceed to explore the behavior of the EMA algorithm near a local Nash equilibrium by examining the eigenvalues of its Jacobian $J_{\hat{F}_{\beta}}$, which are described by Theorem~\ref{th:eigs}.

\begin{theorem} 
\label{th:eigs}
The eigenvalues of the Jacobian $J_{\hat{F}_{\beta}}(x^*, x^*)$ at a local Nash equilibrium $x^*$ are $\beta$ (with multiplicity $n$) and the eigenvalues of $J_F(x^*)$, the Jacobian of the training operator at the equilibrium. 
\end{theorem}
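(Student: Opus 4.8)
The plan is to compute the characteristic polynomial of the $2n \times 2n$ matrix $J_{\hat{F}_{\beta}}(x^*,x^*)$ directly and show that it factors as $(\lambda - \beta)^n$ times the characteristic polynomial of $J_F(x^*)$; reading off the roots then gives the statement. The first observation to record is that $(x^*,x^*)$ really is a fixed point of $\hat{F}_{\beta}$: since a local Nash equilibrium is a fixed point of the training operator, $F(x^*) = x^*$, and substituting into the definition of $\hat{F}_{\beta}$ gives $\big(\beta x^* + (1-\beta)F(x^*),\, x^*\big) = (x^*,x^*)$. The key simplification that makes the Jacobian tractable is that the argument of $J_F$ appearing in the general formula for $J_{\hat{F}_{\beta}}$ collapses at the equilibrium, namely $A[x^*\ x^*]^T = \tfrac{1}{1-\beta}(x^* - \beta x^*) = x^*$. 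Writing $M := J_F(x^*)$, the Jacobian at the fixed point is therefore the constant block matrix
\[
J_{\hat{F}_{\beta}}(x^*,x^*) = \begin{bmatrix} \beta \mathbf{I}_n + M & -\beta M \\ \mathbf{I}_n & \mathbf{0}_n \end{bmatrix}.
\]

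Next I would evaluate $\det\big(\lambda \mathbf{I} - J_{\hat{F}_{\beta}}(x^*,x^*)\big)$ by a Schur-complement argument. For $\lambda \neq 0$ the lower-right block of $\lambda \mathbf{I} - J_{\hat{F}_{\beta}}(x^*,x^*)$ is $\lambda \mathbf{I}_n$, which is invertible, so the block-determinant identity $\det\begin{bsmallmatrix}P & Q \\ R & S\end{bsmallmatrix} = \det(S)\det(P - QS^{-1}R)$ applies and yields
\[
\det\big(\lambda \mathbf{I} - J_{\hat{F}_{\beta}}(x^*,x^*)\big) = \lambda^n \det\!\Big( (\lambda-\beta)\mathbf{I}_n - M + \tfrac{\beta}{\lambda}M \Big) = \lambda^n \Big(\tfrac{\lambda-\beta}{\lambda}\Big)^{\!n} \det(\lambda \mathbf{I}_n - M) = (\lambda-\beta)^n \det(\lambda \mathbf{I}_n - M),
\]
where I used $(\lambda-\beta)\mathbf{I}_n - \big(1 - \tfrac{\beta}{\lambda}\big)M = \tfrac{\lambda-\beta}{\lambda}(\lambda \mathbf{I}_n - M)$. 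Both sides are polynomials in $\lambda$ agreeing on all $\lambda \neq 0$, hence they agree identically, so the characteristic polynomial of $J_{\hat{F}_{\beta}}(x^*,x^*)$ is $(\lambda - \beta)^n$ times that of $J_F(x^*)$. This gives the eigenvalue $\beta$ with multiplicity $n$ together with the $n$ eigenvalues of $J_F(x^*)$, as claimed.

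I do not expect a serious obstacle here: the computation above is essentially the whole proof. The only point needing a word of care is the division by $\lambda$ in the Schur step — this is exactly why one invokes the polynomial-identity argument to cover $\lambda = 0$, which matters precisely when $0$ is itself an eigenvalue of $J_F(x^*)$. As a sanity check and alternative, one can instead exhibit eigenvectors directly: an eigenvector $\begin{bsmallmatrix}u\\v\end{bsmallmatrix}$ with eigenvalue $\lambda$ must satisfy $u = \lambda v$ and $(\lambda-\beta)Mv = \lambda(\lambda-\beta)v$, which is vacuous for $\lambda = \beta$ (yielding the $n$-dimensional eigenspace $\{(\beta v, v)\}$) and forces $v$ to be an eigenvector of $M$ when $\lambda \neq \beta$. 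I would nonetheless present the determinant computation as the main argument, since the eigenvector route needs extra bookkeeping with generalized eigenvectors in the degenerate case where $\beta$ coincides with an eigenvalue of $J_F(x^*)$, whereas the polynomial factorization handles all multiplicities automatically.
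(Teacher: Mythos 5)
Your proof is correct and takes essentially the same route as the paper: both evaluate the characteristic polynomial of the block Jacobian at $(x^*,x^*)$ and factor it as $(\lambda-\beta)^n\det\left(\lambda \mathbf{I}_n - J_F(x^*)\right)$. The only differences are cosmetic — the paper justifies the block-determinant step by citing the identity $\det\begin{bsmallmatrix}A & B\\ C & D\end{bsmallmatrix}=\det(AD-BC)$ for commuting $C,D$, whereas you use a Schur complement for $\lambda\neq 0$ together with a polynomial-identity argument for $\lambda=0$, and you additionally make explicit the fixed-point collapse $A[x^*\ x^*]^T=x^*$ that the paper leaves implicit.
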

\begin{proof}
We solve $\det(\gamma\mathbf{I}_{2n} - J_{\hat{F}_{\beta}}) = 0$ to obtain eigenvalues $\gamma$. Some algebra reveals that
\begin{align*}
    \det(\gamma\mathbf{I}_{2n} - J_{\hat{F}_{\beta}}) &= 
    \begin{vmatrix} 
    (\gamma-\beta)\mathbf{I}_n - J_F(x^*) & \beta J_F(x^*) \\
    -\mathbf{I}_n & \gamma \mathbf{I}_n
    \end{vmatrix} \\
    &= \left|\gamma(\gamma - \beta)\mathbf{I}_n - \gamma J_F(x^*) + \beta J_F(x^*)\right| \\
    &= \left|(\gamma - \beta)(\gamma \mathbf{I}_n - J_F(x^*)) \right| \\
    &= (\gamma - \beta)^n \det(\gamma \mathbf{I}_n - J_F(x^*))
\end{align*}
where we used the following equality (e.g.\ see Lemma 1 in \citet{2018arXiv180704740G}): $\begin{vsmallmatrix}A & B \\ C & D\end{vsmallmatrix} = |AD - BC|$ if $C$ and $D$ commute. Thus, $\det(\gamma\mathbf{I}_{2n} - J_{\hat{F}_{\beta}}) = 0$ if $\gamma = \beta$ or $\det(\gamma \mathbf{I}_n - J_F(x^*)) = 0$ so $\gamma$ is an eigenvalue of $J_F(x^*)$.
\end{proof}

Given that $0 < \beta < 1$ in the EMA algorithm, we have the following corollary that formalizes the intuition that the dynamics of EMA near a local Nash equilibrium are tightly linked to those of the underlying training algorithm.

\begin{corollary}
Assuming that $0 < \beta < 1$, the eigenvalues of $J_F$ lie within the unit ball if and only if the eigenvalues of $J_{\hat{F}_{\beta}}$ lie within the unit ball. 
\end{corollary}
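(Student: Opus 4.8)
The plan is to read the statement straight off Theorem~\ref{th:eigs}. That theorem identifies the spectrum of $J_{\hat{F}_{\beta}}(x^*, x^*)$ as exactly the multiset consisting of the value $\beta$ with multiplicity $n$, together with the $n$ eigenvalues of $J_F(x^*)$ (counted with multiplicity). So the $2n$ eigenvalues of $J_{\hat{F}_{\beta}}$ split into two groups: a ``trivial'' group $\{\beta,\dots,\beta\}$ produced by the averaging recursion, and a ``dynamical'' group that coincides with the spectrum of the underlying training operator $J_F$.

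Next I would invoke the hypothesis $0 < \beta < 1$, which immediately gives $|\beta| < 1$, so every eigenvalue in the trivial group lies strictly inside the unit disk no matter what $J_F$ looks like. Hence the statement ``all eigenvalues of $J_{\hat{F}_{\beta}}$ lie within the unit ball'' is equivalent to ``all eigenvalues in the dynamical group lie within the unit ball'', i.e.\ to ``all eigenvalues of $J_F$ lie within the unit ball''. The two directions of the iff are then both one-liners: the ``only if'' direction uses that the eigenvalues of $J_F$ are a sub-multiset of the spectrum of $J_{\hat{F}_{\beta}}$ (Theorem~\ref{th:eigs}), and the ``if'' direction uses that the only extra eigenvalues introduced, namely the copies of $\beta$, already have modulus less than one.

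I do not expect a genuine obstacle here; the entire content sits in Theorem~\ref{th:eigs}, and the corollary is pure spectral bookkeeping. The only point that warrants a moment's care is to phrase the equivalence for the full multiset of $2n$ eigenvalues, so that no eigenvalue of $J_F$ is inadvertently left out of the comparison — and this is guaranteed precisely because Theorem~\ref{th:eigs} accounts for all $2n$ eigenvalues with multiplicity.
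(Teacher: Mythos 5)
Your proposal is correct and follows exactly the route the paper intends: the corollary is read directly off Theorem~\ref{th:eigs}, since the spectrum of $J_{\hat{F}_{\beta}}$ is the spectrum of $J_F$ together with $n$ copies of $\beta$, and $0<\beta<1$ places those extra eigenvalues strictly inside the unit ball. The paper gives no separate proof precisely because the statement is this same piece of spectral bookkeeping, so there is nothing to add or correct.
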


Also, we see that EMA could potentially slow down the rate of convergence compared to the underlying training method if $\beta$ is larger than the magnitude of all other eigenvectors of $J_{F_{\beta}}$.

\section{Experiments}
\label{sec:Experiments}

We use both illustrative examples (i.e.\ mixtures of Gaussians) as well as four commonly used real-world datasets, namely CIFAR-10 \citep{cifar10}, STL-10 \citep{stl10}, CelebA \citep{liu2015faceattributes}, and ImageNet \citep{ILSVRC15} to show the effectiveness of averaging. For mixtures of Gaussians, the architecture, hyperparameters and other settings are defined in its own section. For CelebA, 64x64 pixel resolution is used, while all other experiments are conducted on 32x32 pixel images. 

All dataset samples are scaled to $[-1,1]$ range. No labels are used in any of the experiments. For the optimizer, we use ADAM \citep{ADAM} with $\alpha=0.0002$, $\beta_1 = 0.0$ and $\beta_2 = 0.9$. For the GAN objective, we use the original GAN (Referring to Non-Saturating variant) \citep{gan} objective and the Wasserstein-1 \citep{wgan} objective, with Lipschitz constraint satisfied by gradient penalty \citep{wgan_gp}. Spectral normalization \citep{spectral-norm} is only used with the original objective. In all experiments, the updates are alternating gradients. 

The network architecture is similar to a full version of a progressively growing GAN \citep{pggan}, with details provided in the Appendix. We call this architecture conventional. ResNet from \citet{wgan_gp} is used as a secondary architecture. Unless stated otherwise, the objective is the original one, the architecture is conventional, discriminator to generator update ratio is 1, $\beta$ value is $0.9999$ and MA starting point is 100k. 
Inception \citep{improved-gan} and FID \citep{DBLP:journals/corr/HeuselRUNKH17} scores are used as quantitative measures. Higher inception scores and lower FID scores are better. We have used the online Chainer implementation\footnote{~\url{https://github.com/pfnet-research/chainer-gan-lib/blob/master/common/evaluation.py}} for both scores. For Inception score evaluation, we follow the guideline from \citet{improved-gan}. The FID score is evaluated on 10k generated images and statistics of data are calculated at the same scale of generation e.g.\ 32x32 data statistics for 32x32 generation. The maximum number of iterations for any experiment is 500k. For each experiment, we state the training duration. At every 10k iterations, Inception and FID scores are evaluated, and we pick the best point. All experiments are repeated at least 3 times with random initializations to show that the results are consistent across different runs.

In addition to these experiments, we also compare the averaging methods, with \textit{Consensus Optimization} \citep{numerics-of-gan}, \textit{Optimistic Adam} \citep{optimism} and \textit{Zero Centered Gradient Penalty} \citep{DBLP:journals/corr/abs-1801-04406} on the Mixture of Gaussians data set and CIFAR-10. For CIFAR-10 comparison, we use a DCGAN-like architecture \cite{dcgan} and follow the same hyperparameter setting as in Mixture of Gaussians except WGAN-GP's $\lambda$ which is $10.0$.


To ensure a fair comparison, we use the same samples from the prior distribution for both averaged and non-averaged generators. In this way, the effect of averaging can be directly observed, as the generator generates similar images. We did not cherry-pick any images: all images generated are from randomly selected samples from the prior. 
For extended results refer to the Appendix.

\subsection{Mixture of Gaussians}

This illustrative example is a two-dimensional mixture of 16 Gaussians, where the mean of each Gaussian lies on the intersection points of a 4x4 grid. Each Gaussian is isotropic with $\sigma = 0.2$. Original GAN \citep{gan} is used as the objective function with Gradient Penalty from \citet{wgan_gp}, with 1:1 discriminator/generator update and alternative update rule.  We  run the experiment for 40k iterations, and measure Wasserstein-1\footnote{~Python Optimal Transport package at \url{http://pot.readthedocs.io/en/stable/}} distance with 2048 samples from both distributions at every 2k iterations. We take the average of the last 10 measurements from the same experiment and repeat it 5 times. We compare this baseline with \textit{Optimistic Adam} (OMD), \textit{Consensus Optimization} (CO), and \textit{Zero Centered Gradient Penalty} (Zero-GP) by using the same architecture, and following the original implementations as closely as possible. We have also combined EMA and MA with OMD, CO and Zero-GP to see whether they can improve over these methods. Detailed settings are listed in the Appendix.

Table \ref{table:mog} shows the distance for various methods. In all cases, EMA outperforms other methods, and interestingly, it also improves OMD, CO and Zero-GP. This indicates that OMD, CO and Zero-GP do not necessarily converge in non-convex/concave settings, but still cycle. MA also improves the results in certain cases, but not as much as EMA. Our observation is that uniform averaging over long time iterates in the non-convex/concave case hurts performance (more on this in later experiments). Because of this, we have started uniform averaging at later stages in training and treat it as a hyper-parameter. For this experiment, the start interval is selected as 20k.  

\begin{table}[h!]
\caption{Wasserstein-1 Distance for non-averaged generator, EMA, MA, \textit{Optimistic Adam}, \textit{Consensus Optimization} and Zero-GP}
\label{table:mog}
\begin{center}
\begin{tabular}{ llll } 
 \toprule
 \phantom{abc} & no-average & EMA & MA \\ 
 \midrule
 Baseline  & $0.0419 \pm 0.0054$  & $0.0251 \pm 0.0026$ & $0.0274 \pm 0.0016$ \\
 \textit{Optimistic Adam}  & $0.0494 \pm 0.0041$  & $0.0431 \pm 0.0042$ & $0.0525 \pm 0.0024$ \\
 \textit{Consensus Optimization} & $0.0286 \pm 0.0034$  & $0.0252 \pm 0.0035$ & $0.0390 \pm 0.0105$ \\
 \textit{Zero-GP} & $0.0363 \pm 0.0018$ & $0.0246 \pm 0.0068$ & $0.0317 \pm 0.0121$ \\
 \bottomrule
\end{tabular}
\end{center}
\end{table}

Figure \ref{fig:16_gaussian} is an illustration of the above results. The blue regions indicate a lack of support, while the red ones indicate a region of support. Without averaging, the generator omits certain modes at different time points, and covers them later while omitting previously covered modes. We observe a clear fluctuation in this case, while the EMA consistently shows more balanced and stable supports across different iterations. OMD, CO and Zero-GP perform better than the baseline, however they seem to be less stable across different iterations than EMA results.

\begin{figure}[h]
\begin{subfigure}{\linewidth}\centering
  \includegraphics[width=0.7\linewidth]{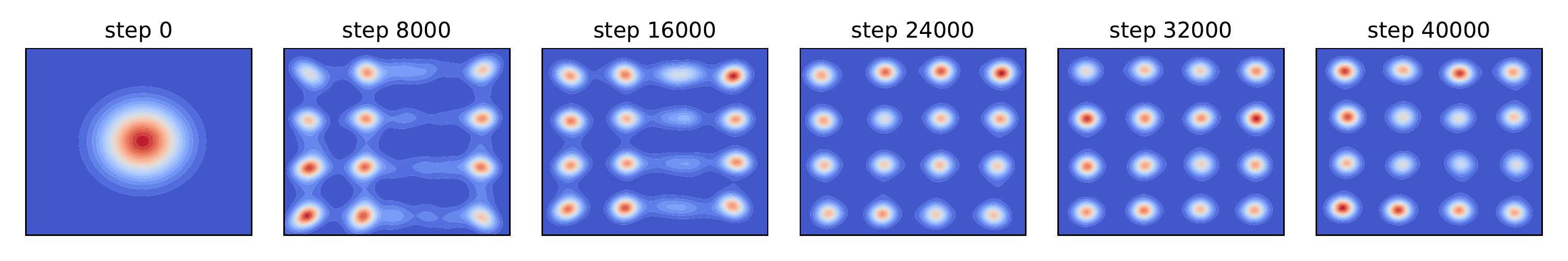}
\end{subfigure}
\begin{subfigure}{\linewidth}\centering
  \includegraphics[width=0.7\linewidth]{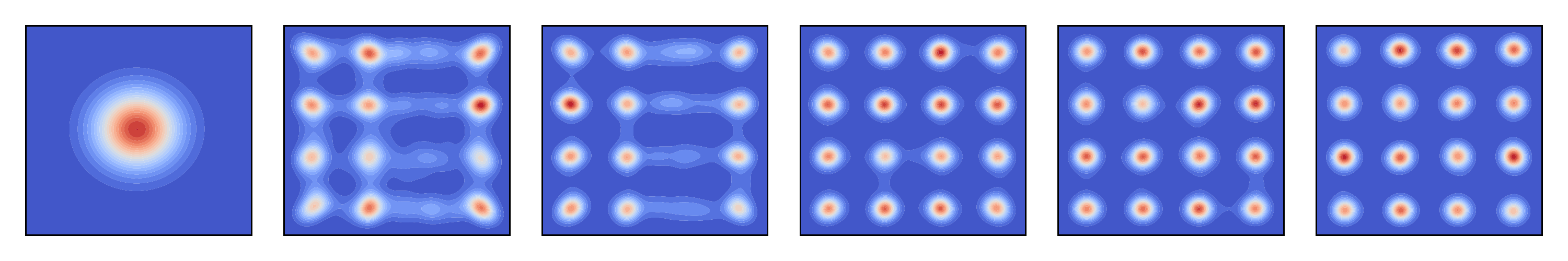}
\end{subfigure}
\begin{subfigure}{\linewidth}\centering
  \includegraphics[width=0.7\linewidth]{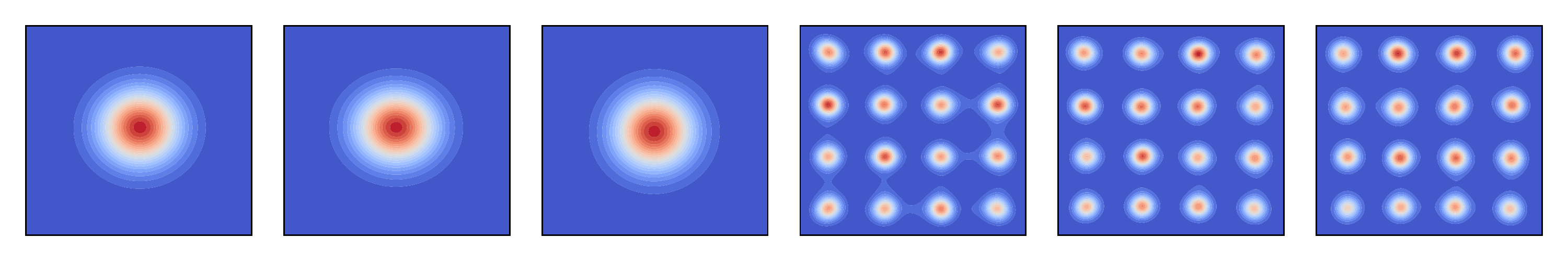}
\end{subfigure}
\begin{subfigure}{\linewidth}\centering
  \includegraphics[width=0.7\linewidth]{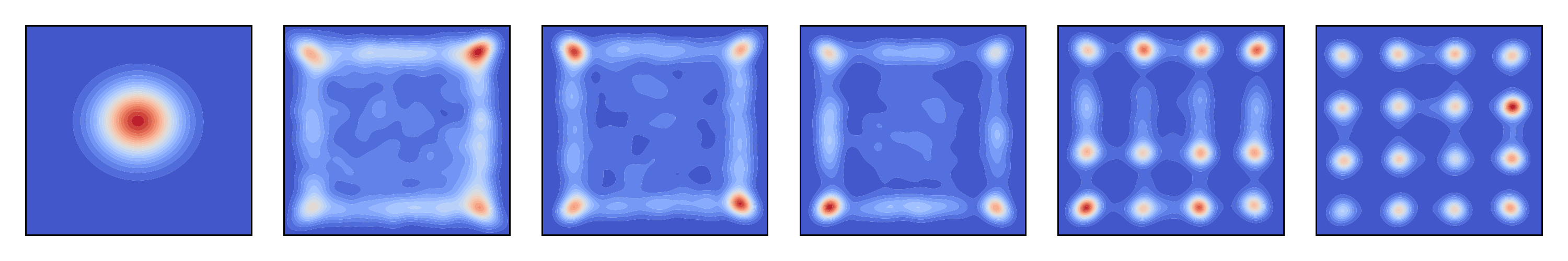}
\end{subfigure}
\begin{subfigure}{\linewidth}\centering
  \includegraphics[width=0.7\linewidth]{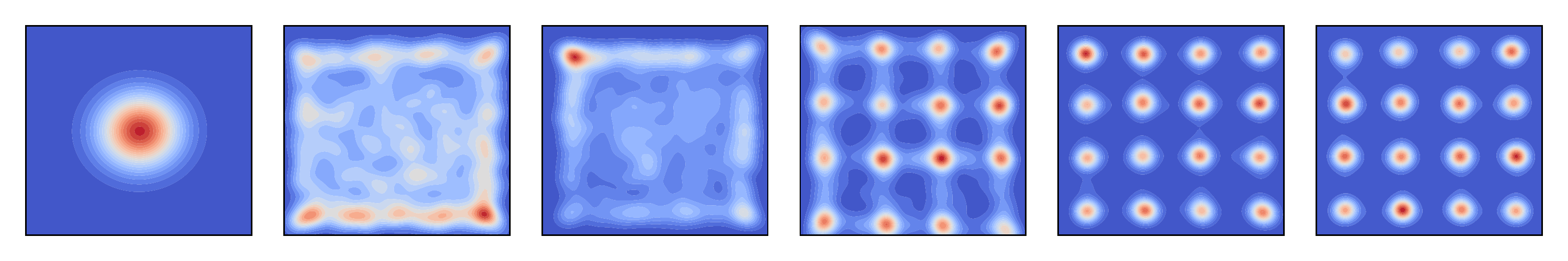}
\end{subfigure}
\begin{subfigure}{\linewidth}\centering
  \includegraphics[width=0.7\linewidth]{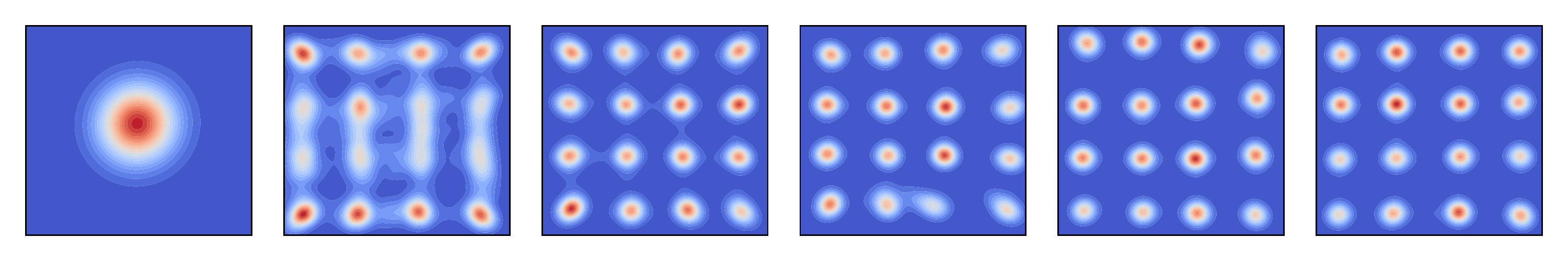}
\end{subfigure}
\caption{Effect of averaging parameters for Mixture of Gaussians. From top to bottom: without average, EMA, MA, \textit{Optimistic Adam}, \textit{Consensus Optimization}, \textit{Zero-GP}.}
\label{fig:16_gaussian}
\end{figure}



\begin{table}[h!]\centering
\caption{Inception and FID scores on CIFAR-10, STL-10 and ImageNet. (100k), (250k) etc. refer to number of iterations of the generator. Experiments were repeated 3 times.}
\label{table:scores}
\resizebox{\textwidth}{!}{\begin{tabular}{@{}lcccccccc@{}}\toprule
& \multicolumn{3}{c}{IS} & \phantom{abc}& \multicolumn{3}{c}{FID}\\
\cmidrule{2-4} \cmidrule{6-8}
& no average & EMA & MA && no average & EMA & MA\\ 
\midrule
CIFAR-10\\
conventional (500k) & $8.14 \pm 0.01$ & $8.89 \pm 0.12 $ & $8.74 \pm 0.19$ && $16.84 \pm 1.00$ & $12.56 \pm 0.17$ & $16.59 \pm 0.52$ \\
conventional WGAN-GP $n_{dis}=1$ (250k)  & $7.18 \pm 0.06$ & $7.85 \pm 0.08$ & $7.93 \pm 0.15$ && $25.51 \pm 1.22$ & $19.18 \pm 0.95$ & $22.71 \pm 0.95$\\
conventional WGAN-GP $n_{dis}=5$ (250k) & $7.22 \pm 0.07$ & $7.92 \pm 0.10$ & $8.20 \pm 0.06$ && $25.32 \pm 0.70$ & $19.72 \pm 0.31$ & $20.02 \pm 0.75$\\
ResNet $n_{dis}=5$ (120k) & $7.86 \pm 0.13$ & $8.46 \pm 0.13$ & $8.68 \pm 0.09$ && $20.64 \pm 1.38$ & $17.58 \pm 1.84 $ & $18.30 \pm 1.32$\\
ResNet WGAN-GP $n_{dis}=1$ (250k) & $7.63 \pm 0.12$ & $8.22 \pm 0.05$ & $8.51 \pm 0.07$ && $21.49 \pm 0.40$ & $15.85 \pm 0.27$ & $16.27 \pm 0.56$\\
ResNet WGAN-GP $n_{dis}=5$ (250k) & $7.38 \pm 0.26$ & $7.94 \pm 0.31$ & $8.32 \pm 0.25$ && $23.20 \pm 2.62$ & $19.41 \pm 2.57$ & $19.33 \pm 3.17$\\
\midrule
STL-10\\
conventional (250k) & $7.96 \pm 0.35$ & $8.39 \pm 0.10$ & $8.23 \pm 0.24$ && $22.33 \pm 1.59$ & $19.64 \pm 1.38$ & $26.11 \pm 2.27$\\
\midrule
ImageNet\\
conventional (500k) & $8.33 \pm 0.10$ & $8.88 \pm 0.14$ & $8.33 \pm 0.03$ && $24.59 \pm 0.59$ & $21.83 \pm 0.93$ & $26.77 \pm 1.12$\\
\bottomrule
\end{tabular}}
\end{table}

\subsection{Qualitative Scores}
Table \ref{table:scores} tabulates Inception and FID scores with no averaging, EMA and MA. In all datasets, significant improvements are seen in both scores for EMA. EMA achieves better results more consistently on different datasets when compared to MA. The main reason why MA gets worse results is that it averages over longer iterates with equal weights (refer to Figure \ref{fig:cifar-10-scores} and others in the Appendix). We have observed improvements when the starting point of MA is pushed later in training, however for convenience and ease of experimentation our main results are based on the EMA method.

\subsection{CIFAR-10}
Figure \ref{fig:cifar-10} shows images generated with and without EMA for CIFAR-10 after 300k iterations. Objects exhibit fewer artifacts and look visually better with EMA. It is interesting to note that the averaged and non-averaged versions converge to the same object, but can have different color and texture. Figure \ref{fig:cifar-10-scores} shows Inception and FID scores during training. EMA model outperform its non-averaged counter-part consistently by a large margin, which is more or less stable throughout the training. However MA model's FID performance reduces gradually as it considers longer time windows for averaging. This observation also holds in STL10 and ImageNet experiments (See Appendix). Our intuition is that this phenomenon occurs because generator iterates does not stay in the same parameter region and averaging over parameters of different regions produces worse results. Besides, we have seen clear disagreement between FID and IS in case of MA. We think this happens because IS is not a proper metric to measure difference between two distributions and has many known flaws such as it does not measure intra-class diversity \citep{2018arXiv180101973B,zhou2018activation,DBLP:journals/corr/HeuselRUNKH17}.

Figure \ref{fig:cifar-10-comparison} compares EMA, CO and OMD. EMA clearly improves baseline and OMD in both scores, while its improvement on CO is smaller but still consistent.

\begin{figure}[h!]
\centering
\begin{subfigure}{.49\textwidth}
  \centering
  \includegraphics[width=1\linewidth]{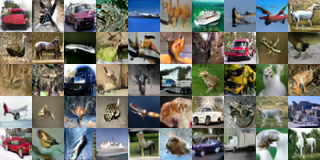}
  \caption{Without averaging}
\end{subfigure}%
~~
\begin{subfigure}{.49\textwidth}
  \centering
  \includegraphics[width=1\linewidth]{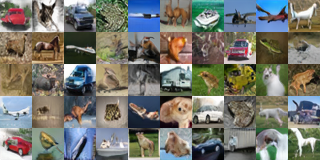}
  \caption{With EMA}
\end{subfigure}
\caption{Generation for CIFAR-10 dataset.}
\label{fig:cifar-10}
\end{figure}

\vspace{-5mm}

\begin{figure}[h!]
\centering
\begin{subfigure}{.45\textwidth}
  \centering
  \includegraphics[width=1.0\linewidth]{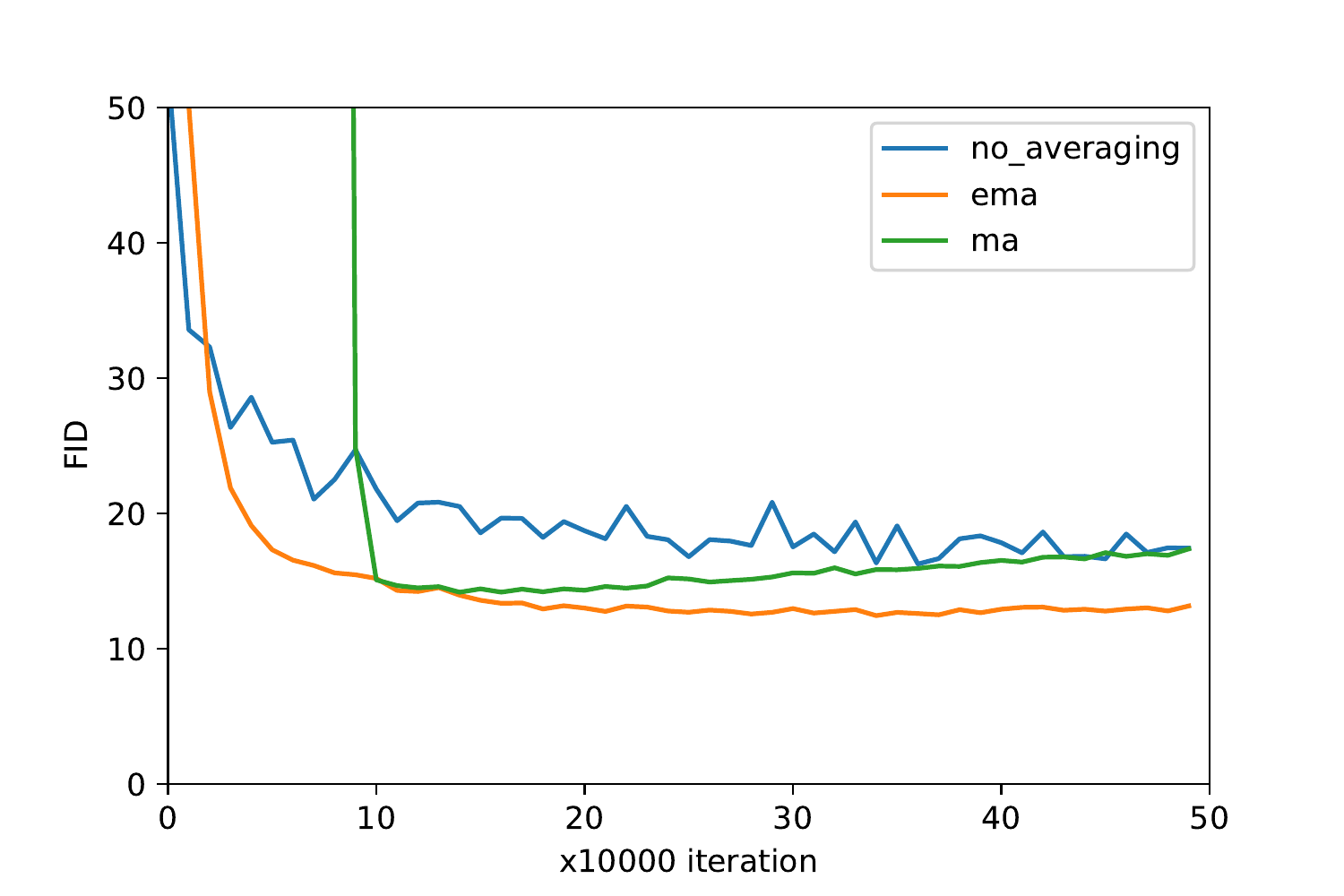}
\end{subfigure}%
\begin{subfigure}{.45\textwidth}
  \centering
  \includegraphics[width=1.0\linewidth]{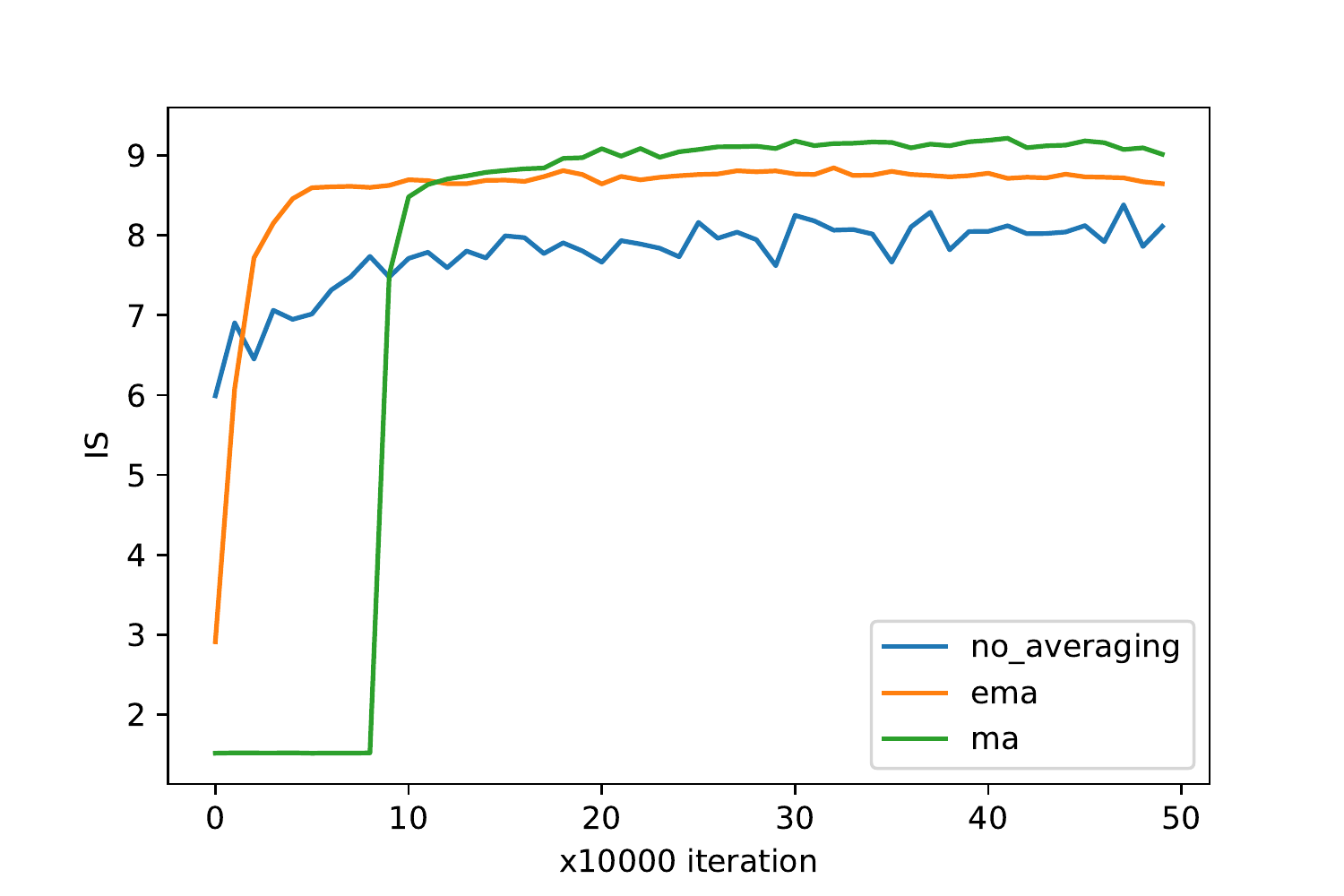}
\end{subfigure}
\caption{CIFAR-10 FID and IS scores during training. Setting: Original GAN objective, conventional architecture, $n_{dis} = 1$}
\label{fig:cifar-10-scores}
\end{figure}

\vspace{-5mm}

\begin{figure}[h!]
\centering
\begin{subfigure}{.45\textwidth}
  \centering
  \includegraphics[width=1.0\linewidth]{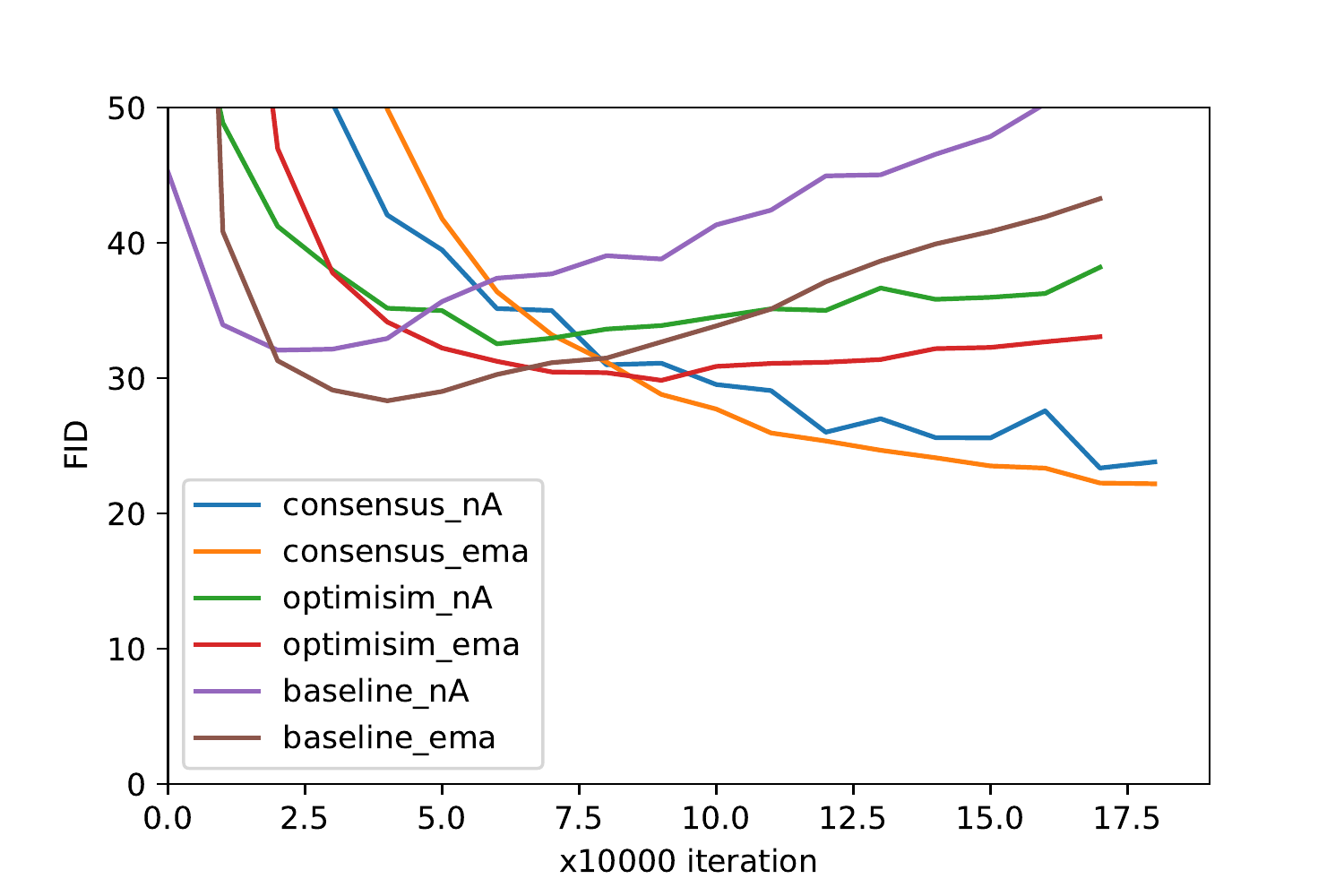}
\end{subfigure}%
\begin{subfigure}{.45\textwidth}
  \centering
  \includegraphics[width=1.0\linewidth]{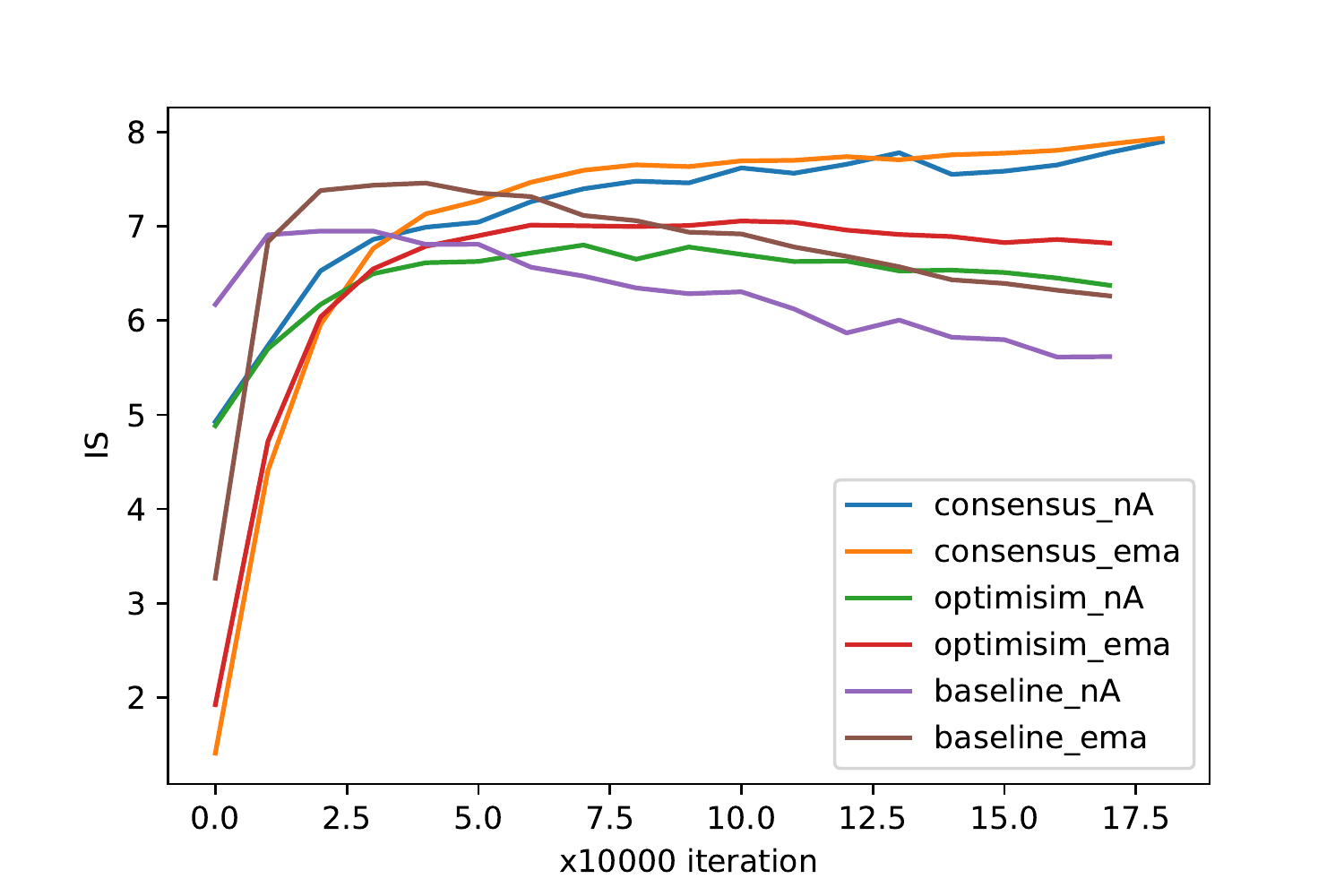}
\end{subfigure}
\caption{Comparison of EMA, \textit{Optimistic Adam} and \textit{Consensus Optimization} on CIFAR-10.}
\label{fig:cifar-10-comparison}
\end{figure}

\subsection{CelebA}

Figure \ref{fig:celeba-various-beta} shows the same 10 samples from the prior fed to the non-averaged generator, the EMA generator with various $\beta$ values, and the MA generator at 250k iterations in the training. The non-averaged images show strong artifacts, but as we average over longer time windows, image quality improves noticeably. 
Interestingly, significant attribute changes occurs as $\beta$ approaches $1$. There is a shift in gender, appearance, hair color, background etc., although images are still similar in pose and composition. EMA results with $\beta=0.999$ and $\beta=0.9999$ also look better than MA results, which are averaged over the last 200k iterates. This shows there is an optimal $\beta$/window size to tune for best results.

\begin{figure}[h!]
\centering
  \includegraphics[width=0.98\linewidth]{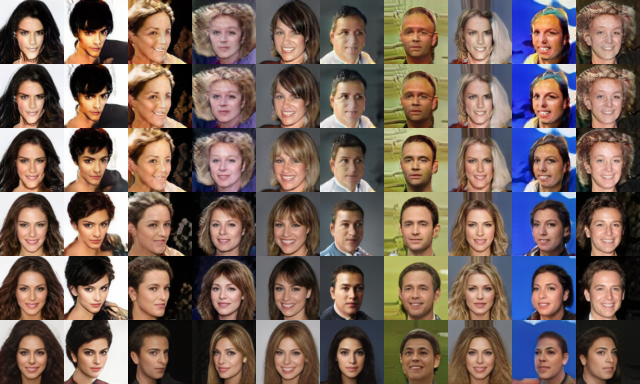}
\caption{Generation for CelebA dataset for various $\beta$ values at 250k iterations. From top to bottom: (a) non-averaged generator, (b) $\beta=0.9$, (c) $\beta=0.99$, (d) $\beta=0.999$, (e) $\beta=0.9999$, (f) MA.}
\label{fig:celeba-various-beta}
\end{figure}

Figure \ref{fig:celeba-iterates} compares the stability of a non-averaged generator and EMA by using the same 2 samples from the prior. Starting from 50k iteration, images are generated from both models with 10k interval until 200k iterations. Images from the non-averaged generator change attributes frequently, while only preserving generic image composition. Meanwhile, the averaged generator produces smoother changes, with attributes changing more slowly. 

\begin{figure}[h!]
\centering
\begin{subfigure}{0.98\textwidth}
  \centering
  \includegraphics[width=1.0\linewidth]{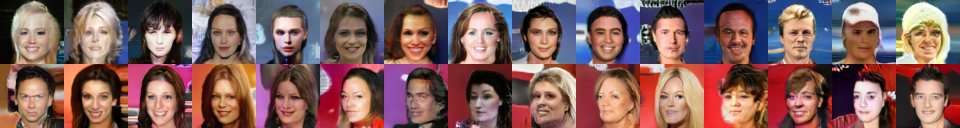}
\end{subfigure}
\begin{subfigure}{0.98\textwidth}
  \centering
  \includegraphics[width=1.0\linewidth]{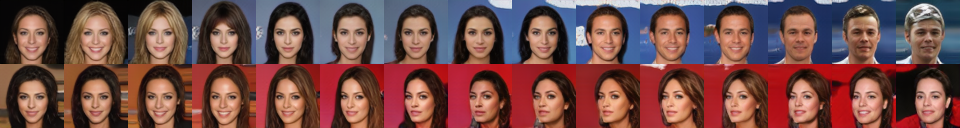}
\end{subfigure}
\caption{Generation for CelebA dataset for the same 2 noise samples from 50k to 200k with 10k intervals. (top 2 rows) without averaging, (bottom two rows) with EMA $\beta=0.9999$}
\label{fig:celeba-iterates}
\end{figure}

\subsection{STL-10 \& ImageNet}

Figure \ref{fig:stl-10_imagenet} shows images generated with and w/o EMA for STL-10 and ImageNet. Even though quantitative scores are better for EMA, we do not see as clear visual improvements as in previous results but rather small changes. Both models produce images that are unrecognizable to a large degree. This observation strengthens our intuition that averaging brings the cycling generator closer to the local optimal point, but it does not necessarily find a good solution as the local optimum may not be good enough.

\begin{figure}[h!]
\centering
\begin{subfigure}{.5\textwidth}
  \centering
  \includegraphics[width=0.9\linewidth]{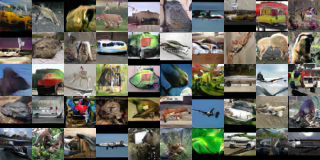}
\end{subfigure}%
\begin{subfigure}{.5\textwidth}
  \centering
  \includegraphics[width=0.9\linewidth]{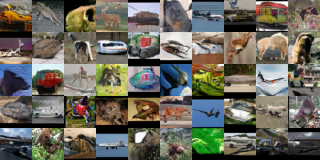}
\end{subfigure}
\begin{subfigure}{.5\textwidth}
  \centering
  \includegraphics[width=0.9\linewidth]{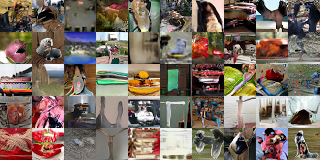}
\end{subfigure}%
\begin{subfigure}{.5\textwidth}
  \centering
  \includegraphics[width=0.9\linewidth]{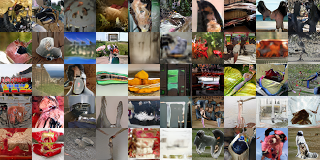}
\end{subfigure}
\caption{Generation for STL-10 \& ImageNet dataset after 500k iteration. (Top): STL-10, (Bottom):ImageNet, (Left): w/o averaging (Right): with EMA}
\label{fig:stl-10_imagenet}
\end{figure}

\section{Conclusion}
We have explored the effect of two different techniques for averaging parameters outside of the GAN training loop, moving average (MA) and exponential moving average (EMA). 
We have shown that both techniques significantly improve the quality of generated images on various datasets, network architectures and GAN objectives. In the case of the EMA technique, we have provided the first  theoretical analysis of its implications, showing that even in simple bilinear settings it converges to stable limit cycles of small amplitude around the solution of the saddle problem.
Averaging methods are easy to implement and have minimal computation overhead. As a result, these techniques are readily applicable in a wide range of settings. In the future, we plan to explore their effect on larger scales as well as on conditional GANs.

\subsubsection*{Acknowledgments}
Yasin Yaz{\i}c{\i} was supported by a SINGA scholarship from the Agency for Science, Technology and Research (A*STAR). Georgios Piliouras would like to acknowledge SUTD grant SRG ESD 2015 097, MOE AcRF Tier 2 grant 2016-T2-1-170, grant PIE-SGP-AI-2018-01
 and NRF 2018 Fellowship NRF-NRFF2018-07. This research was carried out at Advanced Digital Sciences Center (ADSC), Illinois at Singapore Pt Ltd, Institute for Infocomm Research (I2R) and at the Rapid-Rich Object Search (ROSE) Lab at the Nanyang Technological University, Singapore. The ROSE Lab is supported by the National Research Foundation, Singapore, and the Infocomm Media Development Authority, Singapore. Research at I2R was partially supported by A*STAR SERC Strategic Funding (A1718g0045). The computational work for this article was partially performed on resources of the National Supercomputing Centre, Singapore (https://www.nscc.sg). We would like to thank Houssam Zenati and Bruno Lecouat for sharing illustrative example codes.

\bibliography{iclr2019_conference}
\bibliographystyle{iclr2019_conference}

\newpage
\appendix
\section{Additional Results and Experiments}

\subsection{Parameterized Online Averaging}

As an additional method we have used parameterized online averaging:
\begin{equation}
\theta_{MA}^{(t)} = \frac{t - \alpha}{t} \theta_{MA}^{(t-1)} + \frac{\alpha}{t} \theta^{(t)}
\end{equation}
where $\alpha$ regulates the importance of the terms. When $\alpha=1$, the equation corresponds to moving average (Eq.~\ref{ma}). As $\frac{t - \alpha}{t}$ might be negative, this algorithm should be restricted to the case of $t \geq \alpha$. We have started to apply it when $t \geq \alpha$. We have experimented (Table \ref{quantitative_scores_of_alpha_algorithm}) with this method on CIFAR-10 with the same setting of the first row of Table \ref{table:scores}. $\alpha=1$ has failed as it started averaging from the first iteration which aligns with out earlier observations. $\alpha=10, 100, 1000$ improves the scores at varying degree, however none is as good as EMA.


\begin{table}[h!]\centering
\caption{IS and FID on CIFAR-10 (the same setting of the first row of Table \ref{table:scores}) with various $\alpha$ values for parameterized online averaging.}
\label{quantitative_scores_of_alpha_algorithm}
\begin{tabular}{@{}lcc@{}}\toprule
\phantom{abc} & IS & FID\\ 
\midrule
nA & $8.02 \pm 0.20$ & $17.32 \pm 0.53$\\
$\alpha=1$ & \text{failed} & \text{failed} \\
$\alpha=10$ & $8.58 \pm 0.06$ & $14.55 \pm 1.01$\\
$\alpha=100$ & $8.48 \pm 0.06$ & $13.85 \pm 0.51$\\
$\alpha=1000$ & $8.28 \pm 0.04$ & $14.64 \pm 0.51$\\
$\alpha=10000$ & $8.11 \pm 0.13$ & $15.60 \pm 0.59$\\
\bottomrule
\end{tabular}
\end{table}

\subsection{Quantitative Effects of Different $\beta$ for EMA}
In the main body of this paper, we have used $\beta=0.9999$ in all experiments but CelebA as it performs the best. In this section we show the effect of $\beta$ to quantitative scores. We have experimented different $\beta$ values ranging from $0.9$ to $0.9999$ on CIFAR-10 with the same setting of the first row of Table \ref{table:scores}. We have run the experiment four times and provide the mean and standard deviation in Table \ref{quantitative_scores_of_beta}. Each experiment runs for 500k. IS, FID score are collected with 20k iteration interval and best scores are published. From Table \ref{quantitative_scores_of_beta}, we can see clear and progressive improvement when $\beta$ increases from $0.9$ to $0.9999$ in both IS and FID. 

\begin{table}[h!]\centering
\caption{IS and FID on CIFAR-10 (the same setting of the first row of Table \ref{table:scores}) with various $\beta$ values for EMA.}
\label{quantitative_scores_of_beta}
\begin{tabular}{@{}lcc@{}}\toprule
\phantom{abc} & IS & FID\\ 
\midrule
nA & $8.09 \pm 0.07$ & $18.14 \pm 1.08$\\
EMA $\beta=0.9$ & $8.19 \pm 0.09$ & $17.82 \pm 0.43$ \\
EMA $\beta=0.99$ & $8.32 \pm 0.03$ & $15.64 \pm 0.58$ \\
EMA $\beta=0.999$ & $8.69 \pm 0.11$ & $13.77 \pm 0.38$ \\
EMA $\beta=0.9999$ & $9.05 \pm 0.09$ & $12.91 \pm 0.49$ \\
\bottomrule
\end{tabular}
\end{table}

\newpage
\section{Further Results}

\begin{figure}[H]
\centering
\begin{subfigure}{.5\textwidth}
  \centering
  \includegraphics[width=0.9\linewidth]{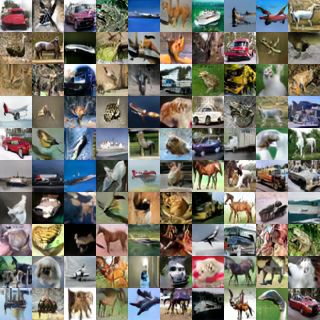}
  \caption{Without averaging}
\end{subfigure}%
\begin{subfigure}{.5\textwidth}
  \centering
  \includegraphics[width=0.9\linewidth]{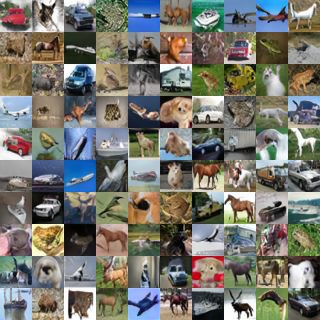}
  \caption{With EMA}
\end{subfigure}
\caption{Generation for CIFAR-10 dataset after 300k iterations.}
\end{figure}

\begin{figure}[H]
\centering
\begin{subfigure}{.5\textwidth}
  \centering
  \includegraphics[width=0.8\linewidth]{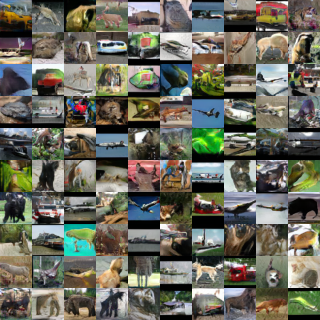}
  \caption{Without averaging}
\end{subfigure}%
\begin{subfigure}{.5\textwidth}
  \centering
  \includegraphics[width=0.8\linewidth]{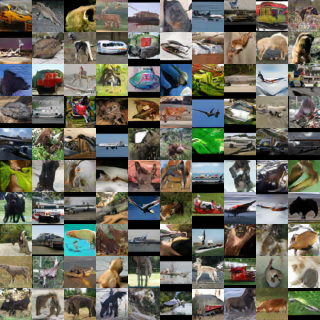}
  \caption{With EMA}
\end{subfigure}
\caption{Generation for STL-10 dataset after 500k iterations.}
\end{figure}

\begin{figure}[H]
\centering
\begin{subfigure}{.5\textwidth}
  \centering
  \includegraphics[width=0.8\linewidth]{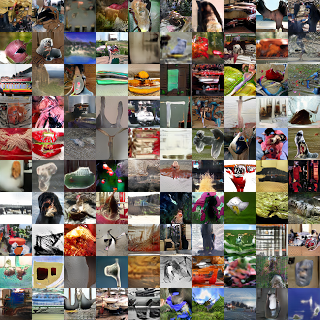}
  \caption{Without averaging}
\end{subfigure}%
\begin{subfigure}{.5\textwidth}
  \centering
  \includegraphics[width=0.8\linewidth]{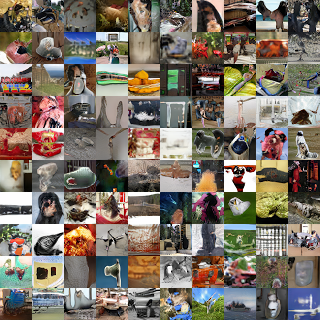}
  \caption{With EMA}
\end{subfigure}
\caption{Generation for ImageNet dataset after 500k iterations.}
\end{figure}

\begin{figure}[H]
\centering
\begin{subfigure}{.5\textwidth}
  \centering
  \includegraphics[width=1\linewidth]{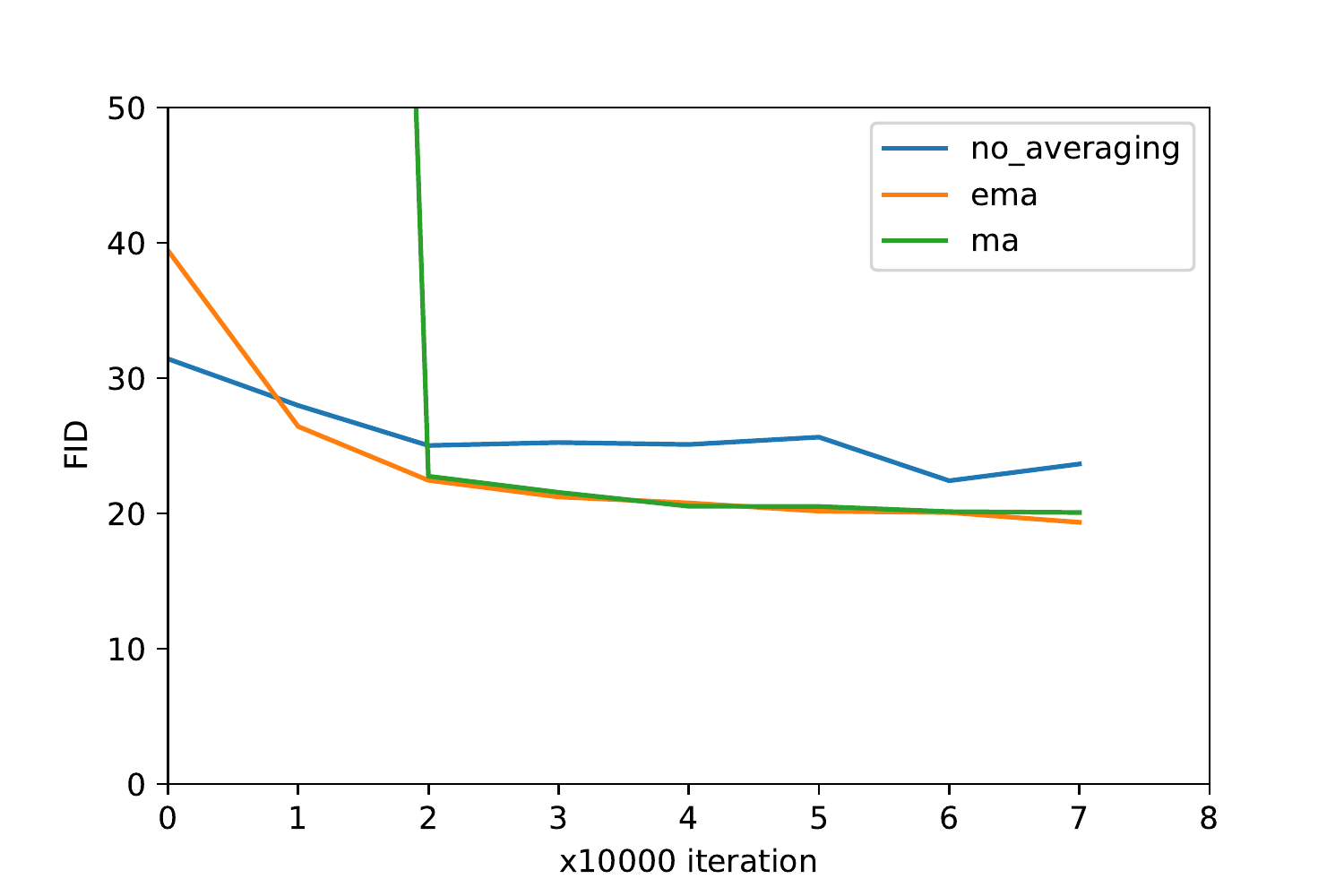}
\end{subfigure}%
\begin{subfigure}{.5\textwidth}
  \centering
  \includegraphics[width=1\linewidth]{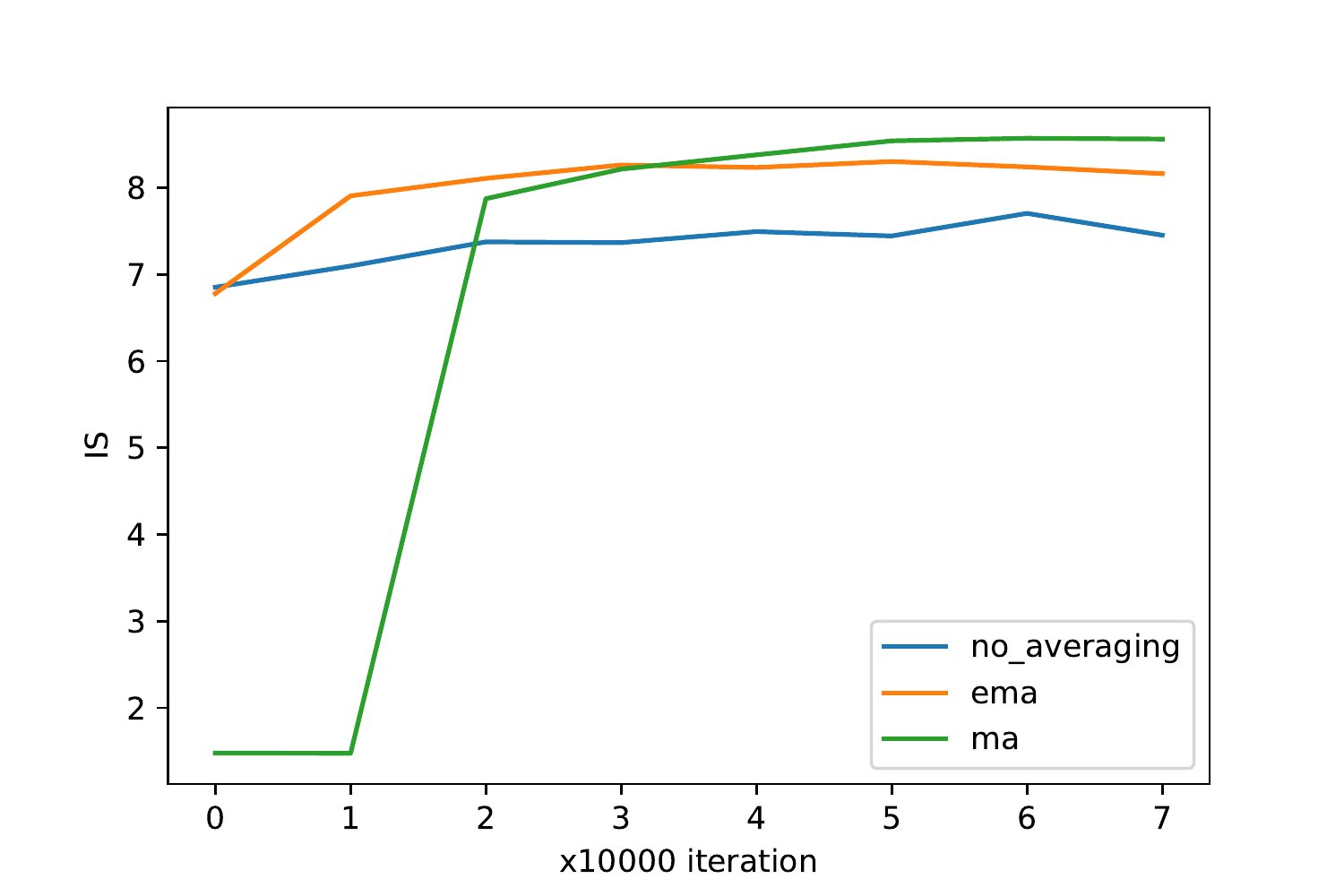}
\end{subfigure}
\caption{CIFAR-10 FID and IS scores during training. Setting: Original GAN objective/ ResNet Architecture/ $n_{dis}=5$}
\end{figure}

\begin{figure}[H]
\centering
\begin{subfigure}{.5\textwidth}
  \centering
  \includegraphics[width=1\linewidth]{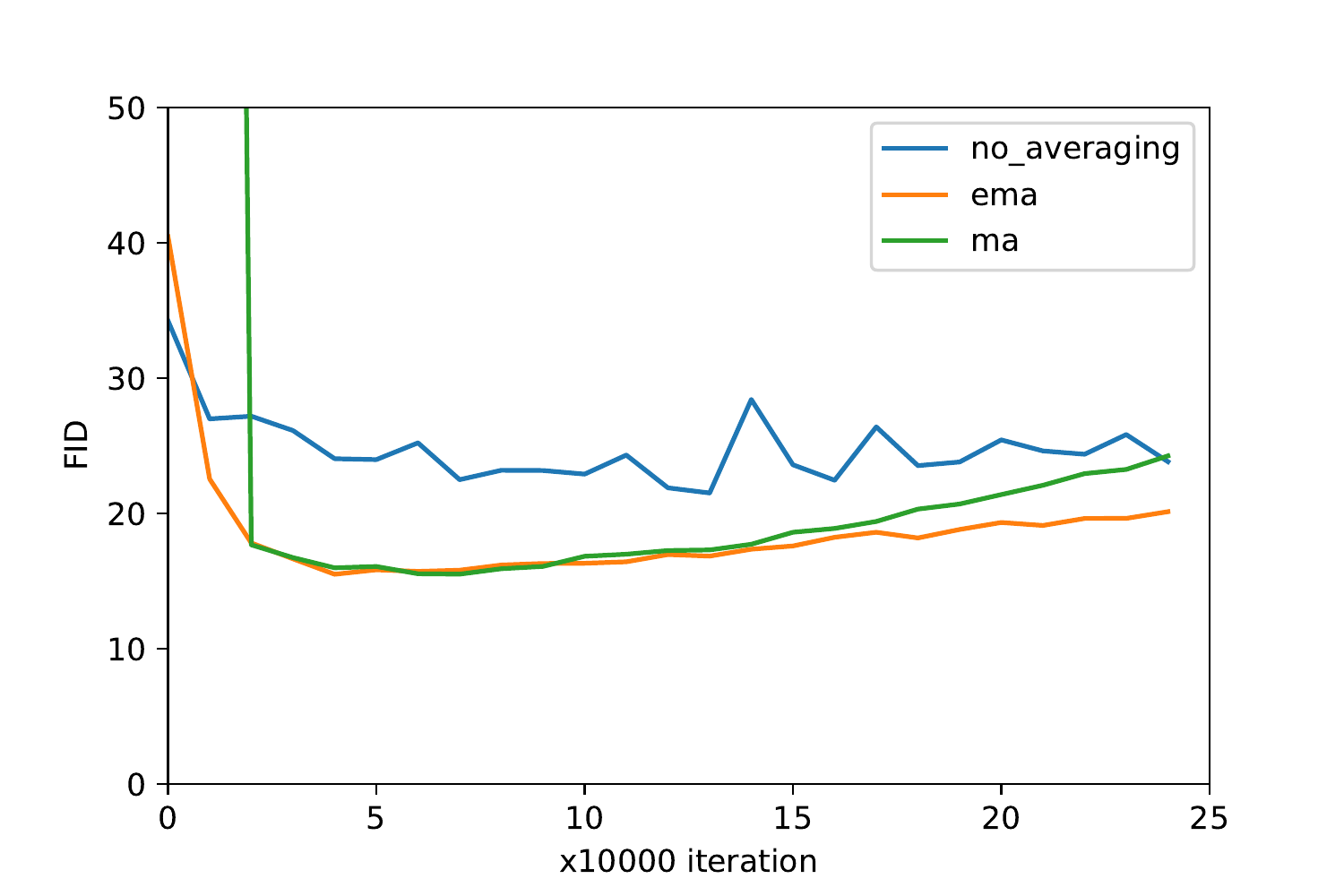}
\end{subfigure}%
\begin{subfigure}{.5\textwidth}
  \centering
  \includegraphics[width=1\linewidth]{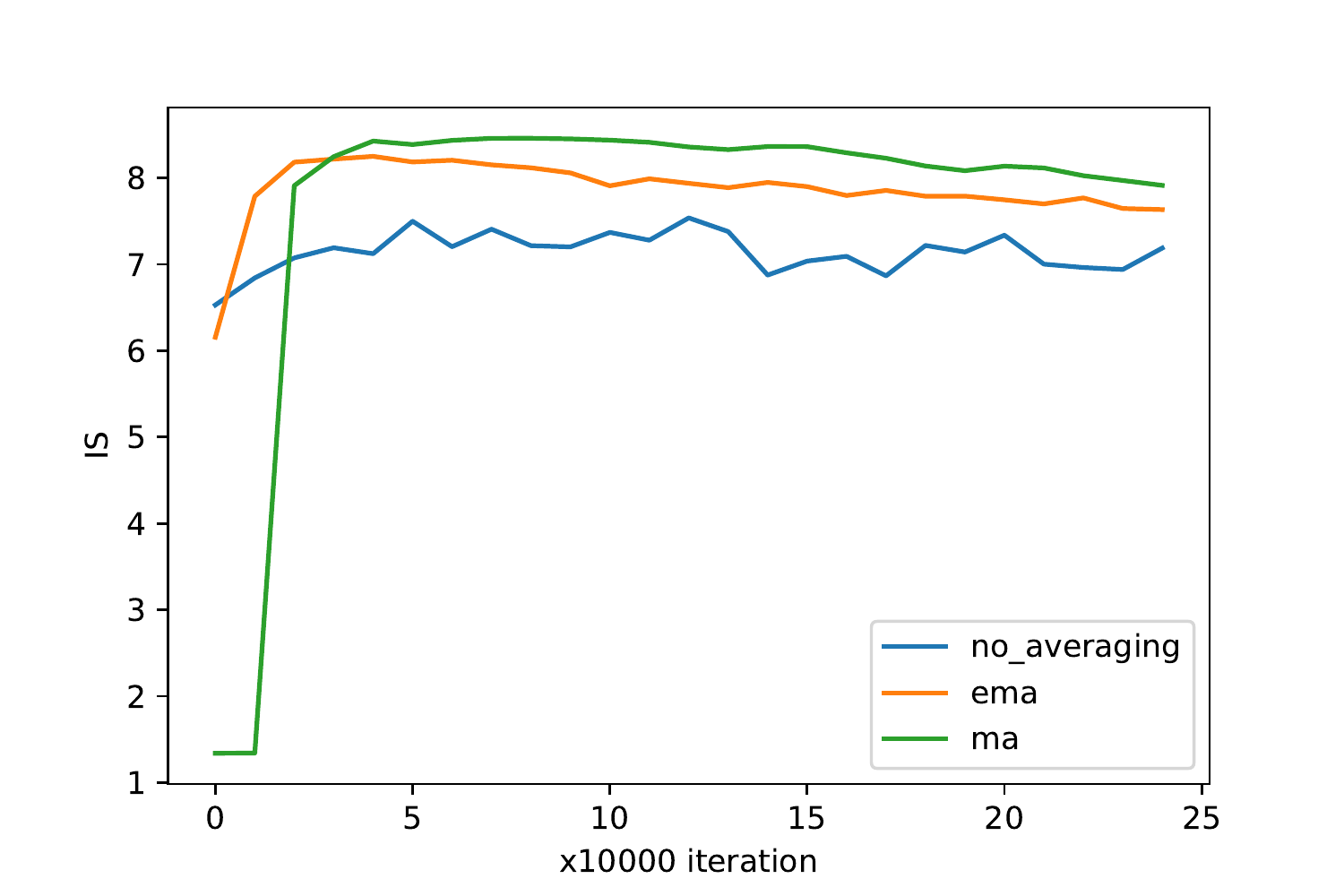}
\end{subfigure}
\caption{CIFAR-10 FID and IS scores during training. Setting: WGAN-GP objective/ ResNet Architecture/ $n_{dis}=1$}
\end{figure}

\begin{figure}[H]
\centering
  \includegraphics[width=0.9\linewidth]{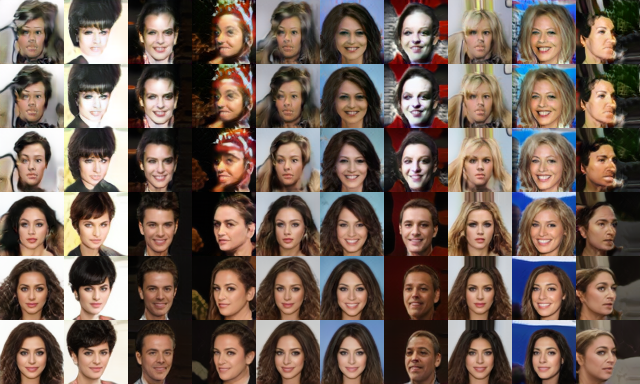}
\caption{Generation for CelebA dataset for various $\beta$ values at 100k iteration. From top to bottom rows: (a) non-averaged generator, (b) $\beta=0.9$, (c) $\beta=0.99$, (d) $\beta=0.999$, (e) $\beta=0.9999$, (f) MA }
\end{figure}

\begin{figure}[H]
\centering
  \includegraphics[width=0.9\linewidth]{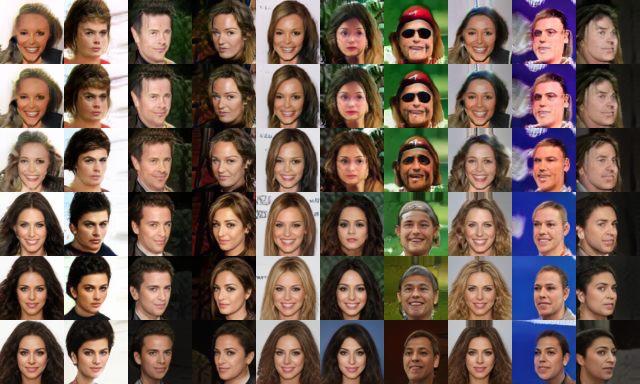}
\caption{Generation for CelebA dataset for various $\beta$ values at 150k iteration. From top to bottom rows: (a) non-averaged generator, (b) $\beta=0.9$, (c) $\beta=0.99$, (d) $\beta=0.999$, (e) $\beta=0.9999$, (f) MA }
\end{figure}

\begin{figure}[H]
\centering
  \includegraphics[width=0.9\linewidth]{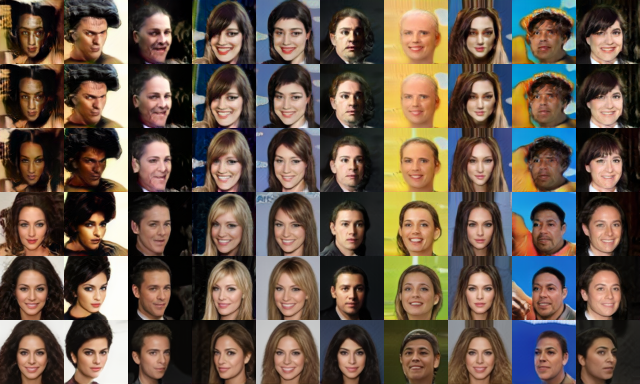}
\caption{Generation for CelebA dataset for various $\beta$ values at 200k iteration. From top to bottom rows: (a) non-averaged generator, (b) $\beta=0.9$, (c) $\beta=0.99$, (d) $\beta=0.999$, (e) $\beta=0.9999$, (f) MA }
\end{figure}

\begin{figure}[H]
\centering
  \includegraphics[width=0.9\linewidth]{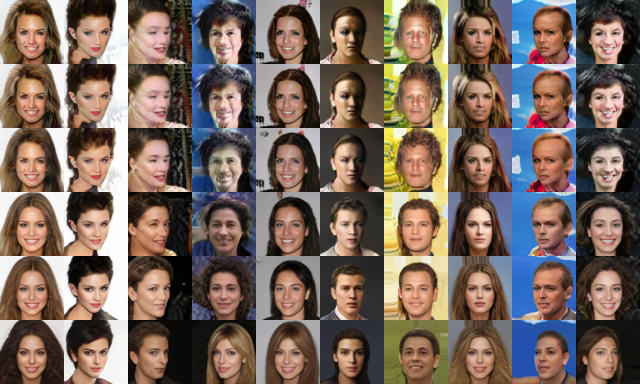}
\caption{Generation for CelebA dataset for various $\beta$ values at 300k iteration. From top to bottom rows: (a) non-averaged generator, (b) $\beta=0.9$, (c) $\beta=0.99$, (d) $\beta=0.999$, (e) $\beta=0.9999$, (f) MA }
\end{figure}

\begin{figure}[H]
\centering
  \includegraphics[width=0.9\linewidth]{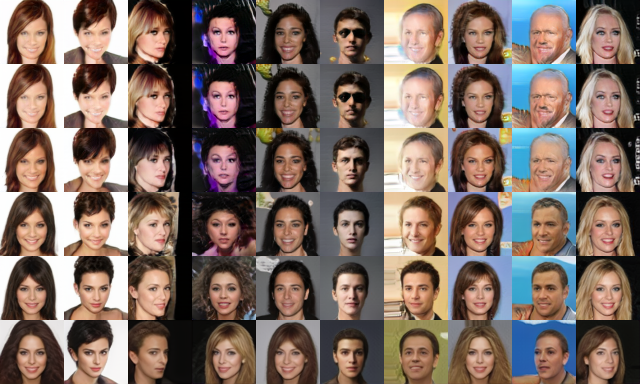}
\caption{Generation for CelebA dataset for various $\beta$ values at 350k iteration. From top to bottom rows: (a) non-averaged generator, (b) $\beta=0.9$, (c) $\beta=0.99$, (d) $\beta=0.999$, (e) $\beta=0.9999$, (f) MA }
\end{figure}

\begin{figure}[H]
\centering
\begin{subfigure}{.5\textwidth}
  \centering
  \includegraphics[width=\linewidth]{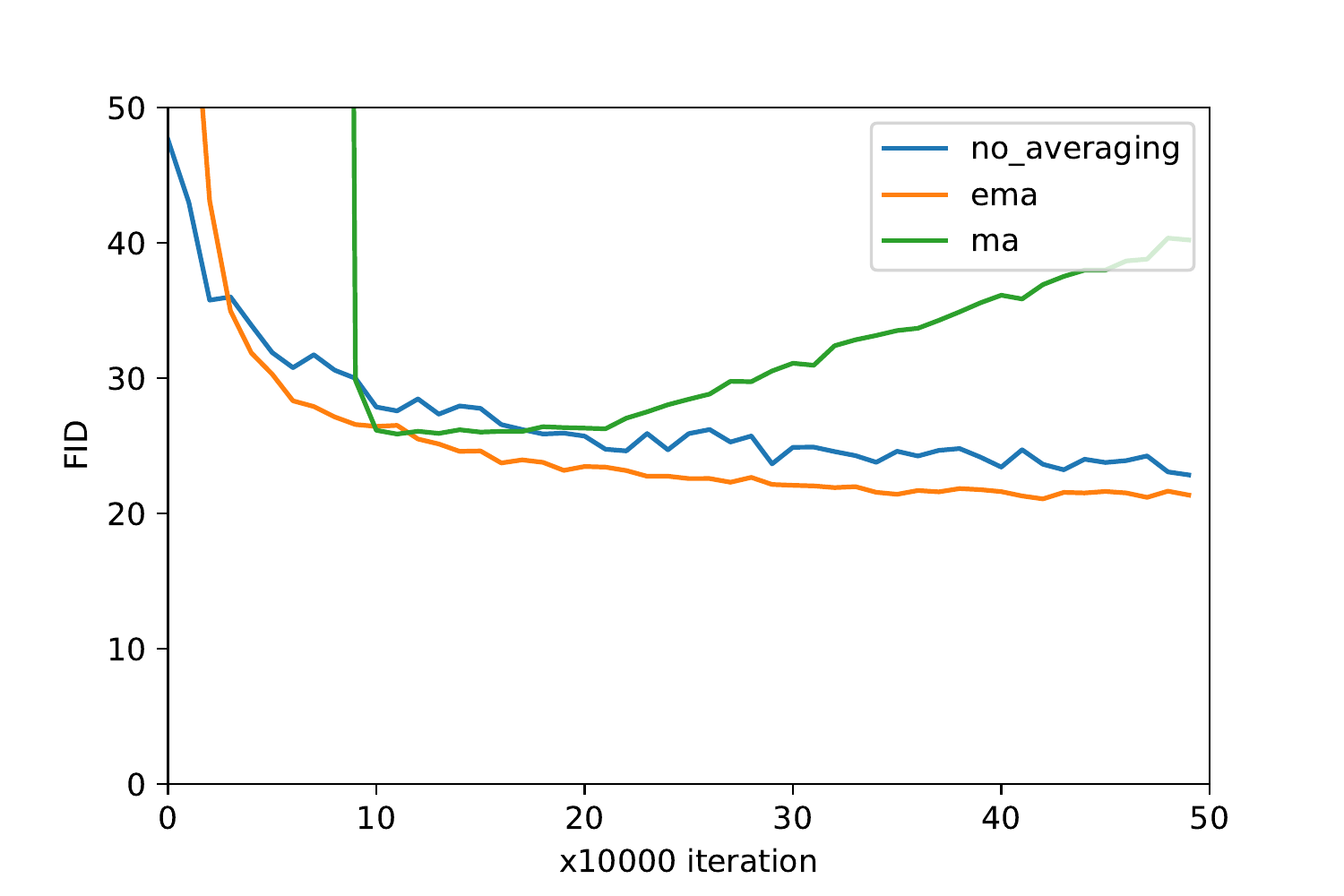}
\end{subfigure}%
\begin{subfigure}{.5\textwidth}
  \centering
  \includegraphics[width=1\linewidth]{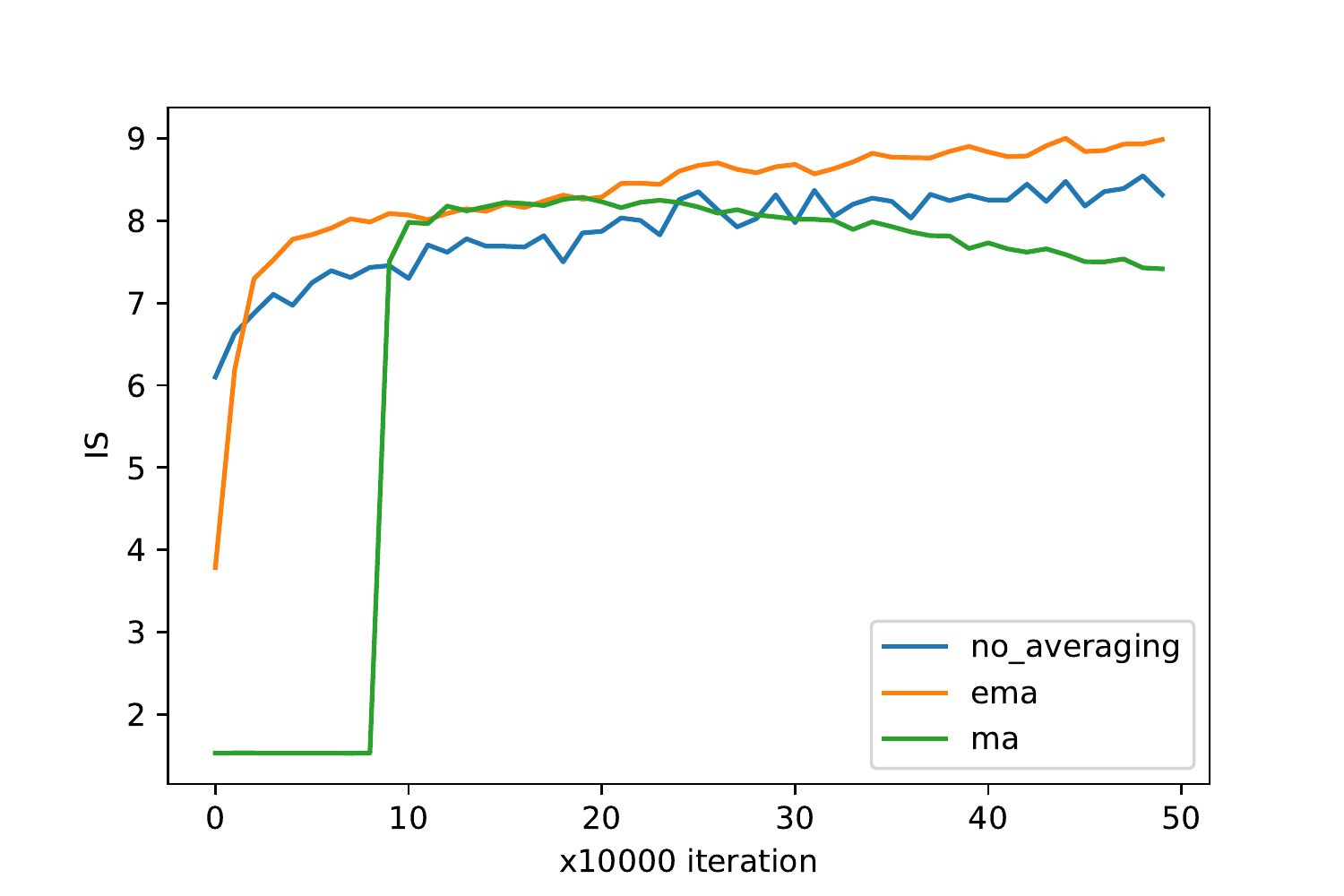}
\end{subfigure}
\caption{ImageNet FID and IS score during training. Setting: Original GAN objective/ Conventional Architecture/ $n_{dis}=1$}
\end{figure}

\begin{figure}[H]
\centering
\begin{subfigure}{.5\textwidth}
  \centering
  \includegraphics[width=1\linewidth]{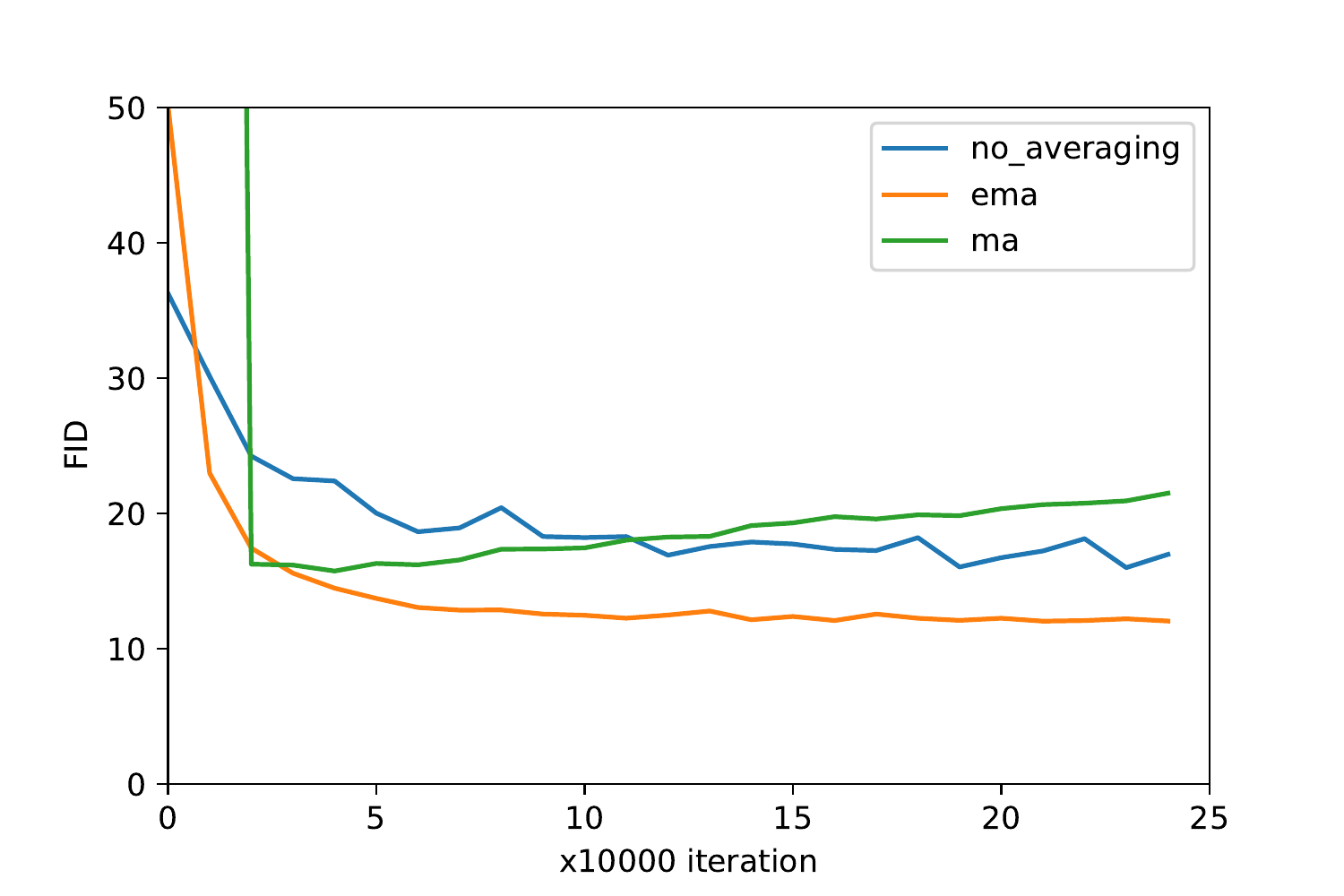}
\end{subfigure}%
\begin{subfigure}{.5\textwidth}
  \centering
  \includegraphics[width=1\linewidth]{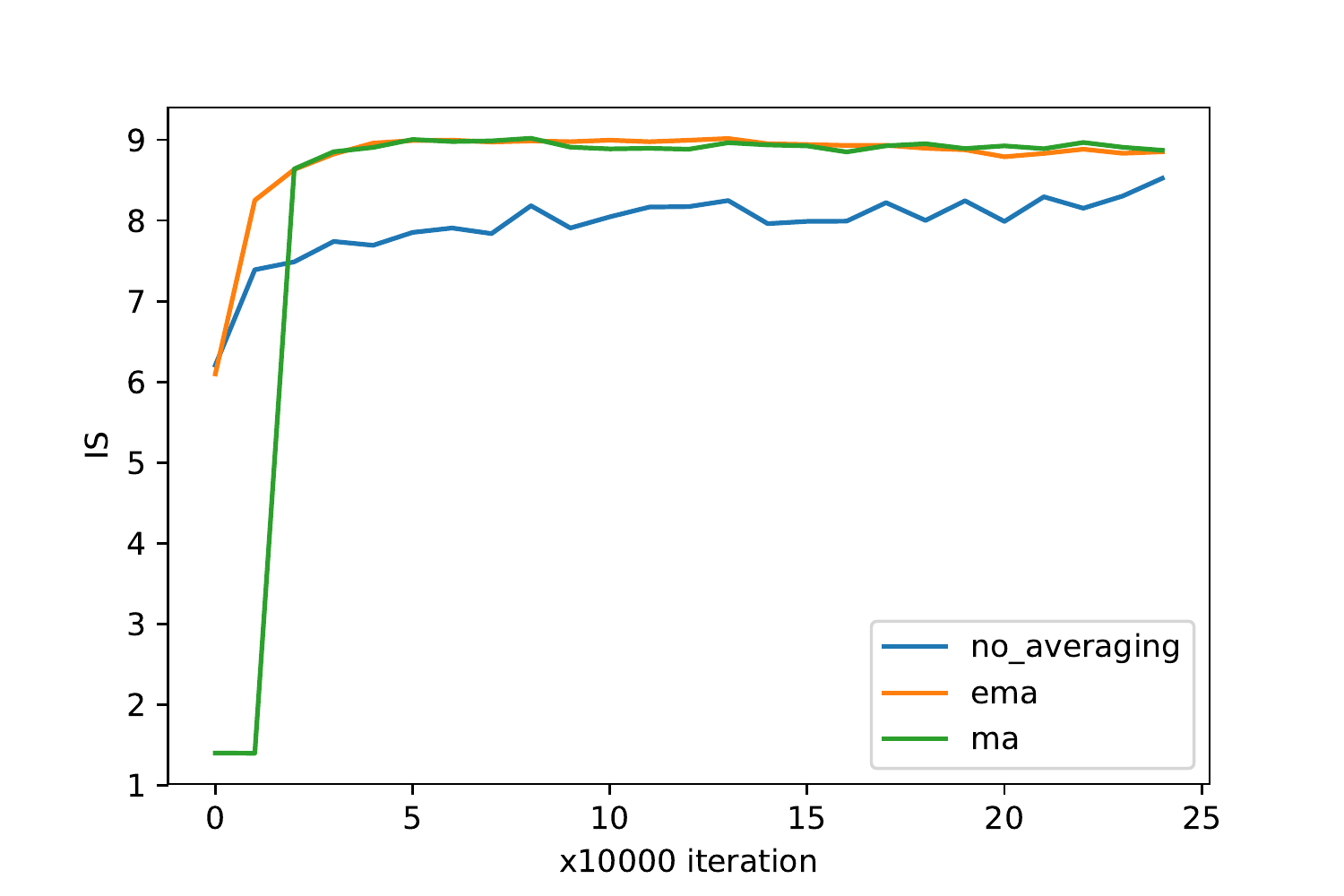}
\end{subfigure}
\caption{CIFAR-10 FID and IS scores during training. Setting: Original GAN objective/ Conventional Architecture/ $n_{dis}=1$}
\end{figure}

\begin{figure}[H]
\centering
\begin{subfigure}{.5\textwidth}
  \centering
  \includegraphics[width=1\linewidth]{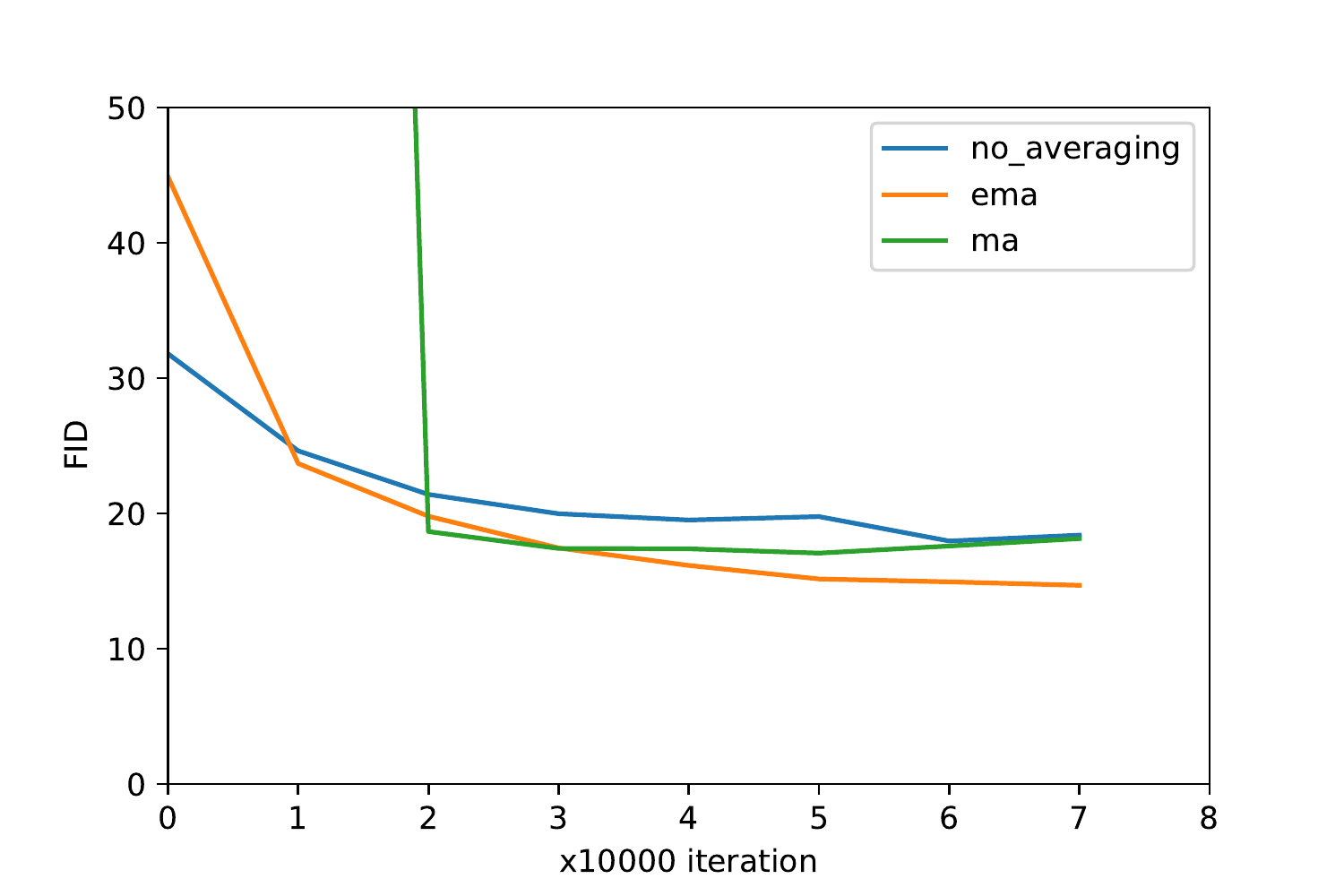}
\end{subfigure}%
\begin{subfigure}{.5\textwidth}
  \centering
  \includegraphics[width=1\linewidth]{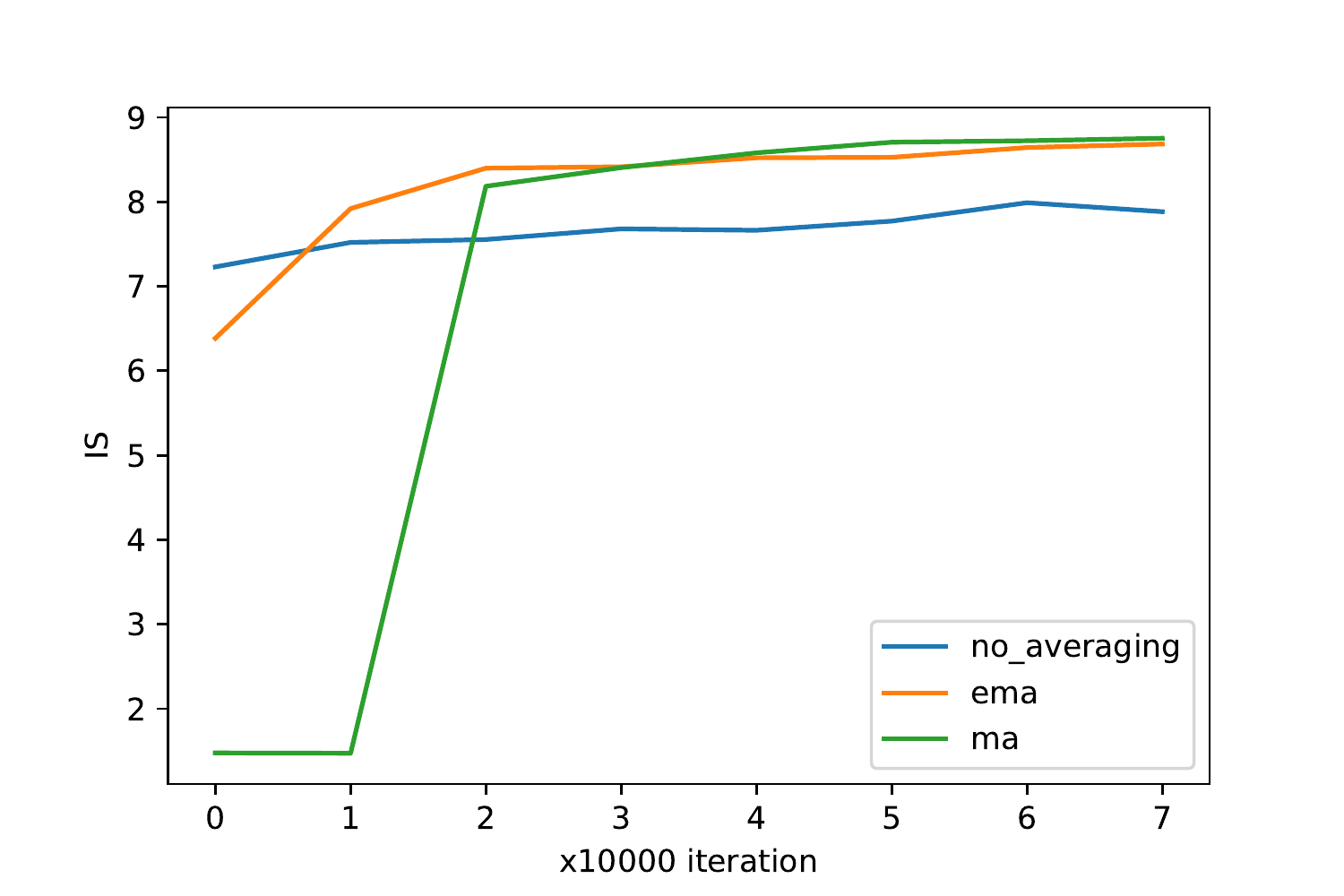}
\end{subfigure}
\caption{CIFAR-10 FID and IS scores during training. Setting: Original GAN objective/ Conventional Architecture/ $n_{dis}=5$}
\end{figure}

\begin{figure}[H]
\centering
\begin{subfigure}{.5\textwidth}
  \centering
  \includegraphics[width=1\linewidth]{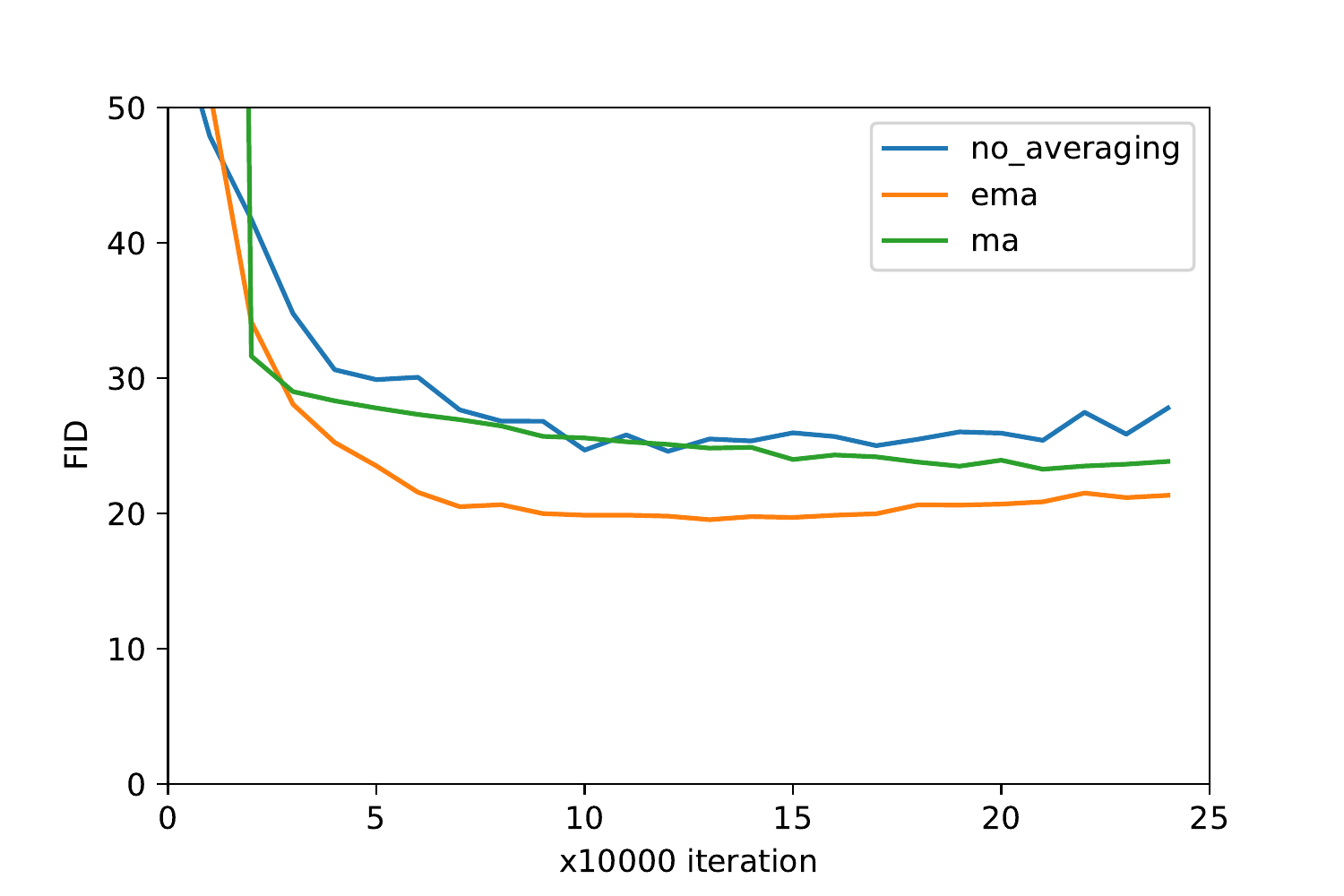}
\end{subfigure}%
\begin{subfigure}{.5\textwidth}
  \centering
  \includegraphics[width=1\linewidth]{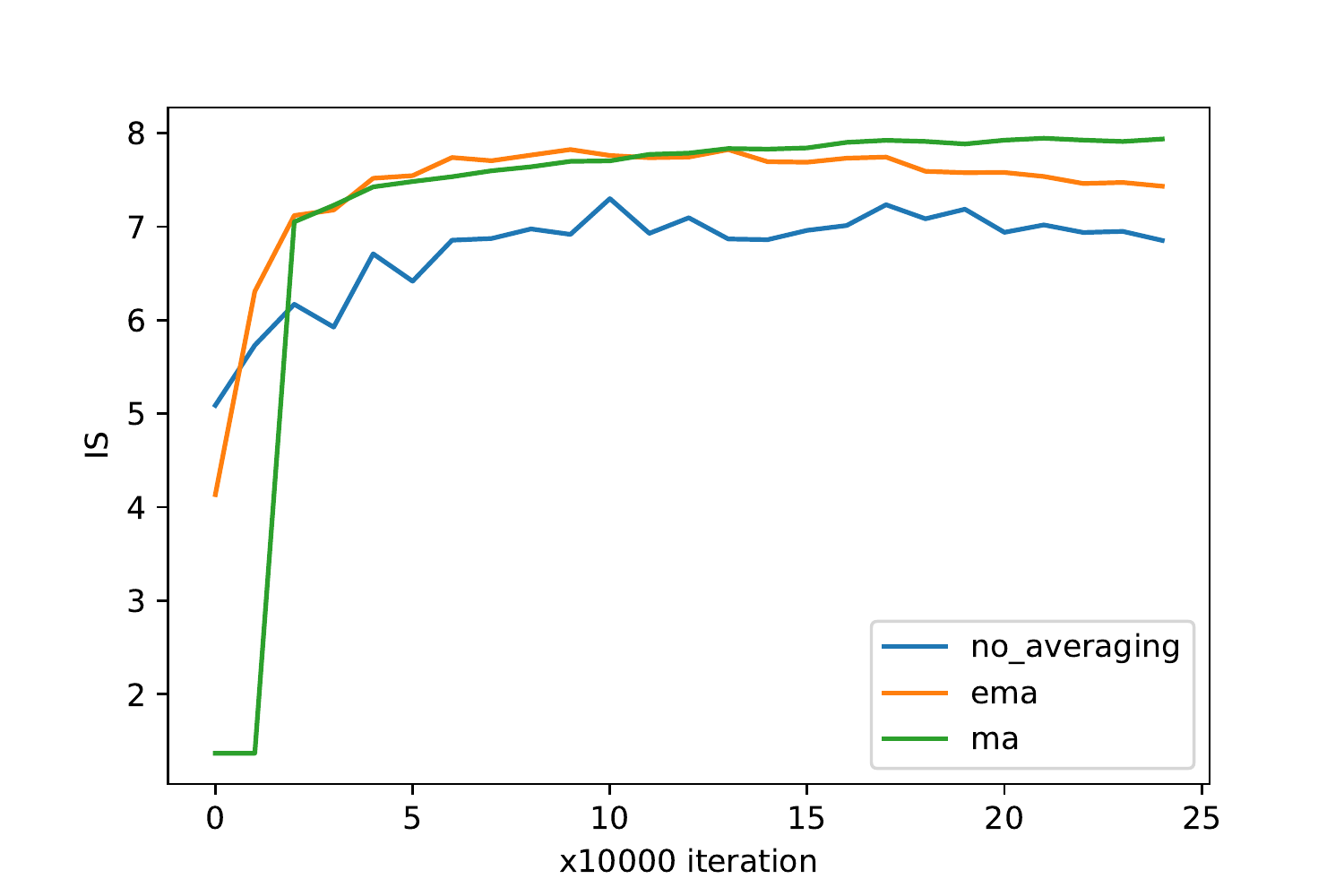}
\end{subfigure}
\caption{CIFAR-10 FID and IS scores during training. Setting: WGAN-GP objective/ Conventional Architecture/ $n_{dis}=1$}
\end{figure}

\begin{figure}[H]
\centering
\begin{subfigure}{.5\textwidth}
  \centering
  \includegraphics[width=1\linewidth]{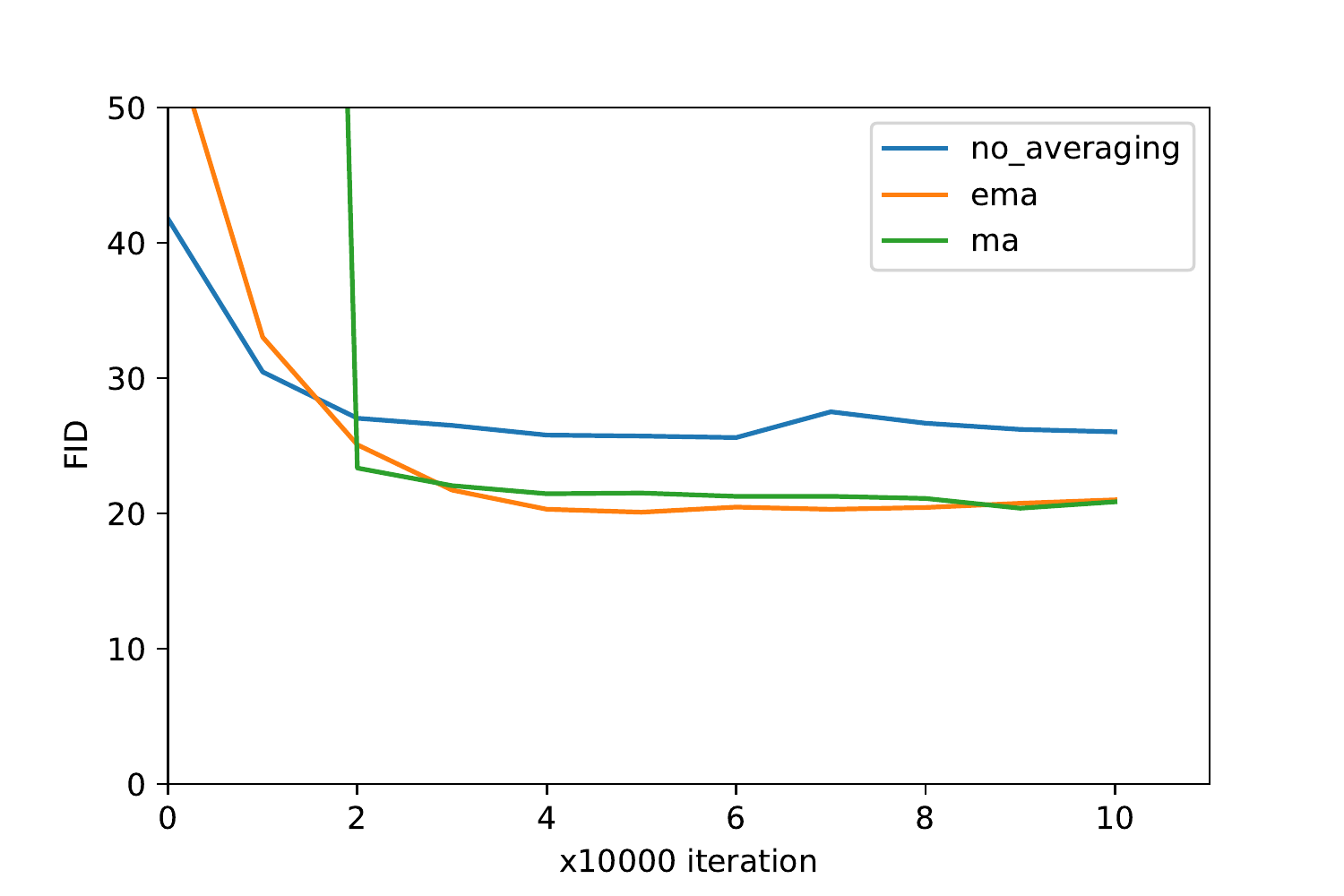}
\end{subfigure}%
\begin{subfigure}{.5\textwidth}
  \centering
  \includegraphics[width=1\linewidth]{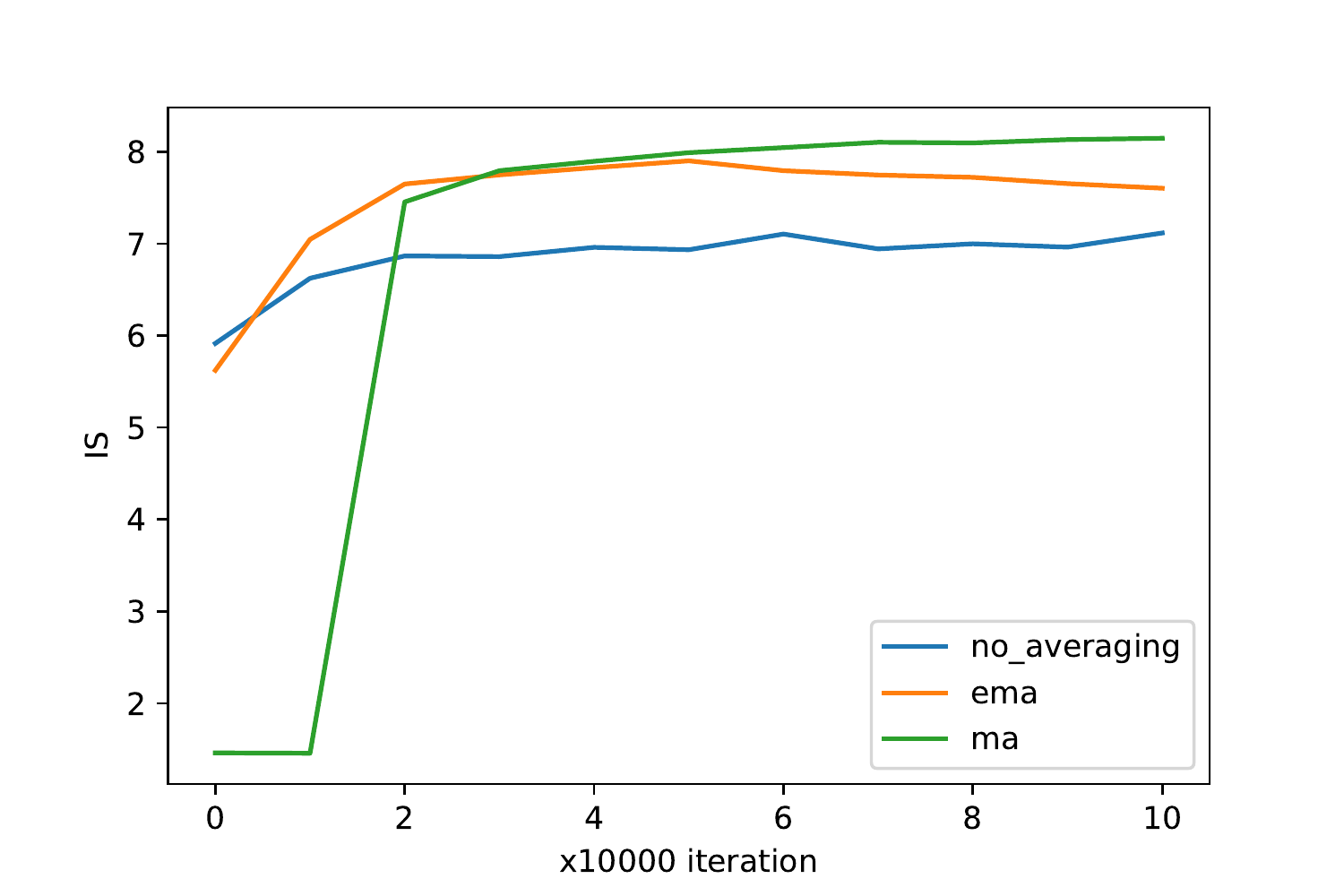}
\end{subfigure}
\caption{CIFAR-10 FID and IS scores during training. Setting: WGAN-GP objective/ Conventional Architecture/ $n_{dis}=5$}
\end{figure}

\newpage
\section{Network Architectures}
When spectral normalization is used in ResNet, the feature number in each layer doubled by 2. Prior distribution for the generator is a 512-dimensional isotropic Gaussian distribution for conventional architecture and 128 for ResNet. Samples from the distribution are normalized to make them lie on a unit hypersphere before passing them into the generator.

\begin{table}[h]
\caption{Conventional Generator Architecture for 32x32 resolution}
\begin{center}
\begin{tabular}{ lcc } 
 \toprule
 Layers & Act. & Output Shape \\
 \midrule
 Latent vector & - & 512 x 1 x 1 \\
 Conv 4 x 4 & BatchNorm - LReLU & 512 x 4 x 4 \\
 Conv 3 x 3 & BatchNorm - LReLU & 512 x 4 x 4 \\
 \midrule
 Upsample & - & 512 x 8 x 8 \\
 Conv 3 x 3 & BatchNorm - LReLU & 256 x 8 x 8 \\
 Conv 3 x 3 & BatchNorm - LReLU & 256 x 8 x 8 \\
 \midrule
 Upsample & - & 256 x 16 x 16 \\
 Conv 3 x 3 & BatchNorm - LReLU & 128 x 16 x 16 \\
 Conv 3 x 3 & BatchNorm - LReLU & 128 x 16 x 16 \\
 \midrule
 Upsample & - & 128 x 32 x 32 \\
 Conv 3 x 3 & BatchNorm - LReLU & 64 x 32 x 32 \\
 Conv 3 x 3 & BatchNorm - LReLU & 64 x 32 x 32 \\
 Conv 1 x 1 & - & 3 x 32 x 32 \\
 \bottomrule
\end{tabular}
\end{center}
\end{table}

\begin{table}[h]
\caption{Conventional Discriminator Architecture for 32x32 resolution}
\begin{center}
\begin{tabular}{ lcc } 
 \toprule
 Layers & Act. & Output Shape \\
 \midrule
 Input image & - & 3 x 32 x 32 \\
 Conv 1 x 1 & LReLU & 64 x 32 x 32 \\
 Conv 3 x 3 & LReLU & 64 x 32 x 32 \\
 Conv 3 x 3 & LReLU & 128 x 32 x 32 \\
 Downsample & - & 128 x 16 x 16 \\
 \midrule
 Conv 3 x 3 & LReLU & 128 x 16 x 16 \\
 Conv 3 x 3 & LReLU & 256 x 16 x 16 \\
 Downsample & - & 256 x 8 x 8 \\
 \midrule
 Conv 3 x 3 & LReLU & 256 x 8 x 8 \\
 Conv 3 x 3 & LReLU & 512 x 8 x 8 \\
 Downsample & - & 512 x 4 x 4 \\
 \midrule
 Conv 3 x 3 & LReLU & 512 x 4 x 4 \\
 Conv 3 x 3 & LReLU & 512 x 4 x 4 \\
 Linear & - & 1 \\
 \bottomrule
\end{tabular}
\end{center}
\end{table}

\newpage
\section{Hyperparameters and Other Settings}
\subsection{Mixture of Gaussian}
We have done 4 experiments for comparison. Our baseline, \textit{Optimistic Adam}, \textit{Consensus Optimization} and \textit{Zero-GP} settings are listed in Table \ref{table:mog_baseline}, Table \ref{table:mog_omd}, Table \ref{table:mog_consensus} and Table \ref{table:zero_gp} respectively. Generator has 4 layers with 256 units in each layer and an additional layer that projects into the data space. The discriminator also has 4 layers with 256 units in each layer and a classifier layer on top. ReLU activation function is used after each affine transformation.

\begin{table}[h]
\caption{Settings for Mixture of Gaussians}
\label{table:mog_baseline}
\begin{center}
\begin{tabular}{ l } 
 \toprule
    batch size = 64 \\
    discriminator learning rate = 0.0002 \\
    generator learning rate = 0.0002 \\
    ADAM $\beta_{1}=0.0$ \\
    ADAM $\beta_{2}=0.9$ \\
    ADAM $\epsilon=1e-8$ \\
    $\beta=0.999$ for EMA \\
    max iteration = 40000 \\
    GP $\lambda=1.0$ \\
    $n_{dis}=1$ \\
    MA start point = 20000 \\
    GAN objective = GAN \\
    Optimizer = ADAM \\
 \bottomrule
\end{tabular}
\end{center}
\end{table}

\begin{table}[h]
\caption{Settings for Mixture of Gaussians for \textit{Optimistic Adam}}
\label{table:mog_omd}
\begin{center}
\begin{tabular}{ l } 
 \toprule
    batch size = 64 \\
    discriminator learning rate = 0.0002 \\
    generator learning rate = 0.0002 \\
    ADAM $\beta_{1}=0.0$ \\
    ADAM $\beta_{2}=0.9$ \\
    ADAM $\epsilon=1e-8$ \\
    $\beta=0.999$ for EMA \\
    max iteration = 40000 \\
    GP $\lambda=1.0$ \\
    $n_{dis}=1$ \\
    MA start point = 20000 \\
    GAN objective = GAN \\
    Optimizer = OptimisticADAM \\
 \bottomrule
\end{tabular}
\end{center}
\end{table}

\begin{table}[h]
\caption{Settings for Mixture of Gaussians for \textit{Consensus Optimization}}
\label{table:mog_consensus}
\begin{center}
\begin{tabular}{ l } 
 \toprule
    batch size = 64 \\
    discriminator learning rate = 0.0002 \\
    generator learning rate = 0.0002 \\
    RMSProp $\beta=0.9$ \\
    RMSProp $\epsilon=1e-10$ \\
    $\beta=0.999$ for EMA \\
    max iteration = 40000 \\
    Consensus $\gamma=10.0$ \\
    $n_{dis}=1$ \\
    MA start point = 20000 \\
    GAN objective = GAN \\
    Optimizer = RMSPropOptimizer \\
 \bottomrule
\end{tabular}
\end{center}
\end{table}

\begin{table}[h]
\caption{Settings for Mixture of Gaussians for \textit{Zero-GP}}
\label{table:zero_gp}
\begin{center}
\begin{tabular}{ l } 
 \toprule
    batch size = 64 \\
    discriminator learning rate = 0.0002 \\
    generator learning rate = 0.0002 \\
    ADAM $\beta_{1}=0.0$ \\
    ADAM $\beta_{2}=0.9$ \\
    ADAM $\epsilon=1e-8$ \\
    $\beta=0.999$ for EMA \\
    max iteration = 40000 \\
    \textit{Zero-GP} $\lambda=1.0$ \\
    $n_{dis}=1$ \\
    MA start point = 20000 \\
    GAN objective = GAN \\
    Optimizer = ADAM \\
 \bottomrule
\end{tabular}
\end{center}
\end{table}

\subsection{CIFAR-10, STL-10, CelebA, ImageNet}

\begin{table}[h]
\caption{Settings for CIFAR-10, STL-10, CelebA, ImageNet}
\label{table:images_baseline}
\begin{center}
\begin{tabular}{ l } 
 \toprule
    batch size = 64 \\
    discriminator learning rate = 0.0002 \\
    generator learning rate = 0.0002 \\
    ADAM $\beta_{1}=0.0$ \\
    ADAM $\beta_{2}=0.9$ \\
    ADAM $\epsilon=1e-8$ \\
    $\beta=0.9999$ for EMA \\
    max iteration = 500000 \\
    WGAN-GP $\lambda=10.0$ \\
    WGAN-GP $n_{dis}=5$ \\
    MA start point = 100000 \\
    GAN objective = GAN or WGAN-GP \\
    Optimizer = ADAM \\
 \bottomrule
\end{tabular}
\end{center}
\end{table}

\end{document}